\newcommand {\lingconc} {\mathcal{S}}
\newcommand {\ent} {\mathrel{{\scriptstyle\mid\!\sim}}}
\newcommand {\sx} {\langle}
\newcommand {\dx} {\rangle}
\newcommand {\emme} {{\mathcal{M}}}
\newcommand {\enne} {\mathcal{N}}
\newcommand {\tc} {\mid}
\newcommand {\vuoto} {\emptyset}
\newcommand{\tip}{{\bf T}}
\newcommand{\sroel}{\mathcal{SROEL}(\sqcap,\times)}
\newcommand{\alc}{\mathcal{ALC}}
\newcommand{\shiq}{\mathcal{SHIQ}}
\newcommand{\alctmin}{{\mathcal{ALC}}+\tip_{min}}
\newcommand{\el}{\mathcal{EL}^{\bot}}
\newcommand{\elpb}{{\mathcal{EL}}^{+}_{\bot}}
\newcommand{\eltm}{{\mathcal{EL}^{\bot} \tip_{\mathit{min}}}}
\newcommand{\alctr}{\mathcal{ALC}+\tip_{\textsf{\tiny R}}}
\newcommand{\dlltm}{{\mbox{\em DL-Lite}_{\mathit{c}}\tip_{\mathit{min}}}}
\newcommand{\be}{\begin{enumerate}}
\newcommand{\ee}{\end{enumerate}}
\newcommand{\hide}[1]{}
\def \cases{\left \{\begin{array}{l}}
\def \endcases{\end{array}\right .}
\newcommand {\bes} {\begin{description}}
\newcommand{\ens} {\end{description}}
\newcommand {\beq} {\begin{quote}}
\newcommand {\enq} {\end{quote}}
\newcommand {\bit} {\begin{itemize}}
\newcommand {\enit} {\end{itemize}}
\newcommand{\wARC}{w^{{A \sqsubseteq \exists R.C}}}
\newcommand{\auxARC}{{aux^{{A \sqsubseteq \exists R.C}}}}
\newcommand{\prj}{{\iota}}
\newenvironment{pozz}{\color{black}}{\color{black}}
\newtheorem{lemma}{Lemma}
\newtheorem{proposition}{Proposition}
\newtheorem{definition}{Definition}
\newtheorem{example}{Example}
 \title[Theory and Practice of Logic Programming]
 { 
 	An ASP approach for reasoning in \\ a concept-aware multipreferential lightweight DL
  }
\author[L. Giordano, D. Theseider Dupr{\'e} ]
{Laura Giordano,  Daniele Theseider Dupr{\'e}  \\
DISIT, Universit\`a del Piemonte Orientale, Italy\\
\email{laura.giordano@uniupo.it, dtd@uniupo.it} \\
}
\begin{document}
\bibliographystyle{acmtrans}

\label{firstpage}

\maketitle
 
 \begin{abstract} 
In this paper we develop a concept aware multi-preferential semantics for dealing with typicality in  description logics, where preferences are associated with concepts, starting from a collection of ranked TBoxes containing defeasible concept inclusions.
Preferences are combined to define a preferential interpretation in which  defeasible inclusions can be evaluated. 
The construction of the concept-aware multipreference semantics is related to Brewka's framework for qualitative preferences. 
We exploit Answer Set Programming (in particular, {\em asprin}) to achieve defeasible reasoning under the multipreference approach for the lightweight description logic $\elpb$. 
\smallskip

\noindent The paper is under consideration for acceptance in TPLP.

\end{abstract}

  \begin{keywords}
  Nonmonotonic Reasoning, Description Logics, Preferences, ASP 
  \end{keywords}

\section{Introduction}

The need to reason about exceptions in ontologies has led to the development of many non-monotonic extensions of Description Logics (DLs), 
incorporating features from NMR formalisms in the literature 
\cite{Straccia93,baader95a,donini2002,lpar2007,sudafricaniKR,bonattilutz,casinistraccia2010,rosatiacm}, 
and notably including extensions of rule-based languages \cite{Eiter2008,Eiter2011,KnorrECAI12,Gottlob14,TPLP2016,Bozzato2018},
as well as  new constructions and semantics \cite{CasiniJAIR2013,bonattiAIJ15,Bonatti2019}. Preferential approaches  \cite{KrausLehmannMagidor:90,whatdoes}
have been extended to description logics, to deal with inheritance with exceptions in ontologies,
allowing for non-strict forms of inclusions,
called {\em typicality or defeasible inclusions}, with different preferential semantics \cite{lpar2007,sudafricaniKR} 
and closure constructions \cite{casinistraccia2010,CasiniDL2013,dl2013,Pensel18}.

In this paper, 
we propose 
a ``concept-aware multipreference semantics" for reasoning about exceptions in ontologies taking into account preferences with respect to different concepts and integrating them into a preferential semantics which allows a standard interpretation of defeasible inclusions.
The intuitive idea is that
the relative typicality of two domain individuals usually depends on the aspects we are considering for comparison:
Bob may be a more typical as sport lover  than Jim, but Jim may be a  more typical swimmer than Bob.
This leads to consider a multipreference semantics in which there is a preference relation $\leq_C$ among individuals for each {\em aspect}  (concept) $C$.
In the previous case, we would have $bob \leq_{SportLover} jim$ and  $jim \leq_{Swimmer} bob$.
Considering different preference relations associated with concepts, and then combining them into a global preference, provides a simple solution to the blocking inheritance problem, which affects rational closure, while still allowing to deal with specificity and irrelevance.

Our approach is strongly related with Gerard Brewka's proposal of preferred subtheories  \cite{Brewka89}, 
later generalized within the framework of Basic Preference Descriptions  for ranked knowledge bases \cite{Brewka04}. 
We extend to DLs the idea of having {\em ranked} or {\em stratified} knowledge bases 
(ranked TBoxes  here) and to define preorders (preferences) on worlds (here, preferences among domain elements in a DL interpretation). 
Furthermore, we associate ranked TBoxes with concepts. 
The ranked TBox for concept $C$ describes the prototypical properties of $C$-elements. For instance, the ranked TBox for concept {\em Horse} describes the typical properties of horses, of  running fast,
having a long mane, being tall, having a tail and a saddle. These properties are defeasible and horses should not necessarily satisfy all of them. 

The  ranked TBox for $C_h$ determines a preference relation $\leq_{C_h}$ 
on the domain,
defining the relative typicality of domain elements with respect to aspect $C_h$.
We  then combine such preferences into a global preference relation $<$ to define a concept-wise multipreference semantics, 
in which {\em all} conditional queries can be evaluated 
as usual in preferential semantics. For instance, 
we may want to check whether typical Italian employees have a boss, given the preference relation $\leq_{\mathit{Employee}}$, 
but no preference relation for concept $\mathit{Italian}$; 
or to check whether
employed students are normally young or have a boss, given the preference relations $\leq_{\mathit{Employee}}$ and $\leq_{\mathit{Student}}$, resp., for employees and for students. 

We introduce a notion of multipreference entailment 
and prove that it satisfies the KLM properties of preferential consequence relations.
This notion of entailment deals properly with irrelevance and specificity, is not subject to the ``blockage of property inheritance" problem,  which affects rational closure \cite{PearlTARK90}, 
i.e., if a subclass is exceptional with respect to a superclass for a given property, it does not inherit from that superclass any other property.

To prove the feasibility of our approach, 
we develop a proof method for reasoning under the proposed multipreference semantics for the description logic $\elpb$ \cite{IncredibleELK_JAR14},
the fragment of OWL2 EL supported by ELK.
We reformulate multipreference entailment as a problem of computing preferred answer sets and,
as a natural choice, we develop an encoding of the multipreferential extension of $\elpb$ in \emph{asprin} \cite{BrewkaAAAI15}, 
exploiting a fragment of Kr\"{o}tzsch's Datalog materialization calculus \shortcite{KrotzschJelia2010}.
 
As a consequence of the soundness and completeness of 
this  reformulation of multipreference entailment,  
we prove that 
concept-wise multipreference entailment is $\Pi^p_2$-complete for  $\elpb$ ranked knowledge bases.

\section{Preliminaries: The description logics $\elpb$}\label{sec:ALC}

We consider the description logic $\elpb$ \cite{IncredibleELK_JAR14} of the ${\cal EL}$ family  \cite{rifel}. 
Let ${N_C}$ be a set of concept names, ${N_R}$ a set of role names
  and ${N_I}$ a set of individual names.  
The set  of $\elpb$ \emph{concepts} can be
defined as follows: 
$C \ \ := A \tc \top \tc \bot  \tc C \sqcap C \tc \exists r.C $, 
where $a \in N_I$, $A \in N_C$ and $r \in N_R$. 
Observe that union, complement and universal restriction are not $\elpb$ constructs.
A knowledge base (KB) $K$ is a pair $({\cal T}, {\cal A})$, where ${\cal T}$ is a TBox and
${\cal A}$ is an ABox.
The TBox ${\cal T}$ is  a set of {\em concept inclusions} (or subsumptions) of the form $C \sqsubseteq D$, where $C,D$ are concepts,
and of  {\em role inclusions}  of the form $r_1 \circ \ldots \circ r_n \sqsubseteq r$, where $r_1,\ldots,r_n, r \in N_R$.
The  ABox ${\cal A}$ is  a set of assertions of the form $C(a)$ 
and $r(a,b)$ where $C$ is a  concept, $r \in N_R$, and $a, b \in N_I$.

An {\em interpretation} for $\elpb$ is a pair $I=\langle \Delta, \cdot^I \rangle$ where:
$\Delta$ is a non-empty domain---a set whose elements are denoted by $x, y, z, \dots$---and 
$\cdot^I$ is an extension function that maps each
concept name $C\in N_C$ to a set $C^I \subseteq  \Delta$, each role name $r \in N_R$
to  a binary relation $r^I \subseteq  \Delta \times  \Delta$,
and each individual name $a\in N_I$ to an element $a^I \in  \Delta$.
It is extended to complex concepts  as follows:
$\top^I=\Delta$, $\bot^I=\vuoto$, 
$(C \sqcap D)^I =C^I \cap D^I$  and 
$(\exists r.C)^I =\{x \in \Delta \tc \exists y.(x,y) \in r^I \ \mbox{and} \ y \in C^I\}.$	
The notions of satisfiability of a KB  in an interpretation and of entailment are defined as usual:

\begin{definition}[Satisfiability and entailment] \label{satisfiability}
Given an $\elpb$ interpretation $I=\langle \Delta, \cdot^I \rangle$: 

	- $I$  satisfies an inclusion $C \sqsubseteq D$ if   $C^I \subseteq D^I$;
	
	- $I$  satisfies a role inclusions  $r_1 \circ \ldots \circ r_n \sqsubseteq r$ if  $r_1^I \circ \ldots \circ r_n^I \subseteq r^I$;

	-  $I$ satisfies an assertion $C(a)$ if $a^I \in C^I$ and an assertion $r(a,b)$ if $(a^I,b^I) \in r^I$.

\noindent
 Given  a KB $K=({\cal T}, {\cal A})$,
 an interpretation $I$  satisfies ${\cal T}$ (resp. ${\cal A}$) if $I$ satisfies all  inclusions in ${\cal T}$ (resp. all assertions in ${\cal A}$);
 $I$ is a \emph{model} of $K$ if $I$ satisfies ${\cal T}$ and ${\cal A}$.

 A subsumption $F= C \sqsubseteq D$ (resp., an assertion $C(a)$, $R(a,b)$),   {is entailed by $K$}, written $K \models F$, if for all models $I=$$\sx \Delta,  \cdot^I\dx$ of $K$,
$I$ satisfies $F$.
\end{definition}

\section{Multiple preferences from ranked TBoxes}\label{sez:RC}

To define a multipreferential semantics for $\elpb$ we extend the language with a typicality operator $\tip$, as done for $\el$ \cite{ijcai2011}. 
In the   language extended  with the typicality operator, an additional concept $\tip(C)$ is allowed (where $C$ is an $\elpb$ concept), whose instances are intended to be the {\em prototypical} instances of concept $C$.
Here, we assume that $\tip(C)$ can only occur on the left hand side of concept inclusion, to allow typicality inclusions of the form $\tip(C) \sqsubseteq D$, meaning that ``typical C's are D's" or ``normally C's are D's". Such inclusions are defeasible, i.e.,  admit exceptions, while standard inclusions are called {\em strict}, and must be satisfied by all domain elements. 

Let ${\cal C}$ be a (finite) set  of distinguished concepts $\{C_1, \ldots, C_k\}$, where $C_1, \ldots, C_k$ are possibly complex $\elpb$ concepts. 
Inspired to Brewka's framework of basic preference descriptions  \cite{Brewka04}, 
we introduce a {\em ranked TBox}  ${\cal T}_{C_i}$ for each concept $C_i \in {\cal C}$,  describing the typical properties $\tip(C_i) \sqsubseteq D$ of $C_i$-elements. 
Ranks (non-negative integers) are assigned to such inclusions; the ones with higher rank are considered more important than the ones with lower rank.

A {\em ranked $\elpb$ knowledge base $K$ over ${\cal C}$} is a tuple $\langle  {\cal T}_{strict}, {\cal T}_{C_1}, \ldots, {\cal T}_{C_k}, {\cal A}  \rangle$, 
where ${\cal T}_{strict}$ is a set of standard concept and role inclusions, ${\cal A}$ is an ABox and, for each $C_j \in {\cal C}$,  ${\cal T}_{C_j}$ is a ranked TBox of defeasible inclusions,
$\{(d^j_i,r^j_i)\}$, where  each  $d^j_i$ is a typicality inclusion of the form $\tip(C_j) \sqsubseteq D^j_i$,  having rank $r^j_i$, a non-negative integer.

\begin{example} \label{exa:horse}
	Consider the ranked KB $\mathit{K= \langle {\cal T}_{strict}, {\cal T}_{Horse}, {\cal A} \rangle}$ (with empty ${\cal A}$), where ${\cal T}_{strict}$ contains 
	$\mathit{Horse \sqsubseteq  Mammal}$, $\mathit{Mammal \sqsubseteq Animal}$,
	and ${\cal T}_{Horse}=\{(d_1,0),(d_2,0),(d_3, 1),(d_4,2)\}$ \linebreak where the defeasible inclusions $d_1,\ldots, d_4$ are as follows:

	$(d_1)$ $\mathit{\tip(Horse) \sqsubseteq \exists has\_equipment. Saddle}$ \ \ \ \ \ \ \ \ \ $(d_2)$ $\mathit{\tip(Horse) \sqsubseteq \exists Has\_Mane.Long}$
	
	$(d_3)$ $\mathit{\tip(Horse) \sqsubseteq RunFast}$  \ \ \ \ \ \ \ \ \  \ \ \ \ \ \ \ \ \  \ \ \ \ \ \ \ \ \ \  $(d_4)$ $\mathit{\tip(Horse) \sqsubseteq \exists Has\_Tail.\top}$
	
	\end{example}
The ranked Tbox ${\cal T}_{Horse}$  can be used to define an ordering among domain elements comparing their typicality as horses.
For instance, given two horses {\em Spirit} and {\em Buddy}, if Spirit has long mane, no saddle, has a tail and  runs fast, it is intended to be more typical than  Buddy, a horse running fast, with saddle and long mane, but without tail, as having a tail (rank 2) is a more important property for horses wrt having a saddle (rank 0).

In order to define an ordering for each $C_i \in {\cal C}$, where $x \leq_{C_i} y$ means that $x$ is at least as typical as $y$ wrt $C_i$  (in the example, $\mathit{Spirit \leq_{Horse} Buddy}$ and, actually, $\mathit{Spirit<_{Horse} Buddy}$), among the preference strategies considered by Brewka, we adopt strategy $\#$,
which considers the number of formulas satisfied by a domain element for each rank.

Given a ranked knowledge base $K=\langle  {\cal T}_{strict}, {\cal T}_{C_1}, \ldots, {\cal T}_{C_k}, {\cal A}  \rangle$, where ${\cal T}_{C_j}=\{(d^j_i,r^j_i)\}$ for all $j=1,\ldots,k$, 
let us consider an $\elpb$ interpretation $I=\langle \Delta, \cdot^I \rangle$ satisfying all the strict inclusions  in ${\cal T}_{strict}$ and assertions in ${\cal A}$.
For each $j$, to define a preference ordering $\leq_{C_j}$ on $\Delta$, 
we first need to determine when a domain element $x \in \Delta$  satisfies/violates 
a typicality inclusion for $C_j$. 
We say that $x \in \Delta$ {\em satisfies} $\tip(C_j) \sqsubseteq D$ in $I$, if  $x   \not \in C_j^I$ or $x \in D^I$, while
$x$ {\em violates} $\tip(C_j) \sqsubseteq D$ in $I$, if  $x   \in C_j^I$ and $x \not  \in D^I$.
Note that any element which is not an instance of $C_j$ trivially satisfies all conditionals $\tip(C_j) \sqsubseteq D^j_i$.
For a domain element $x \in \Delta$, let ${\cal T}_{C_j}^l(x)$ be the set of  typicality inclusions in ${\cal T}_{C_j}$ with rank $l$ which are satisfied by $x$:
${\cal T}_{C_j}^l(x) = \{d \mid (d,l) \in {\cal T}_{C_j} \mbox{ and } x \mbox{ satisfies } d \mbox{ in }  I\}$.

\begin{definition}[ $\leq_{C_j}$] \label{total_preorder}
Given a ranked knowledge base $K$ as above 
and an $\elpb$ interpretation $I=\langle \Delta, \cdot^I \rangle$, 
the {\em preference relation $\leq_{C_j}$  associated with ${\cal T}_{C_j}=\{(D^j_i,r^j_i)\}$ in $I$} is defined as follows:
\begin{align*}
x_1 & \leq_{C_j}  x_2  \mbox{  \ \ iff \ \ }   \mbox{either }  |{\cal T}_{C_j}^l(x_1) |=  |{\cal T}_{C_j}^l(x_2)| \mbox{ for all $l$, } \\
& \mbox{ \ \ \ \ \ or  $\exists l$ such that $|{\cal T}_{C_j}^l(x_1)| > |{\cal T}_{C_j}^l(x_2)|$ and, $\forall h>l$, $|{\cal T}_{C_j}^h(x_1)| = |{\cal T}_{C_j}^h(x_2)|$} 
\end{align*}
A strict preference relation $<_{C_j}$ and an equivalence relation $\sim_{C_j}$ can be defined as usual letting: 
$x_1 <_{C_j} x_2$ iff ($x_1 \leq_{C_j} x_2$ and  not $x_2 \leq_{C_j} x_1$),  and 
$x \sim_{C_j} y$ iff ($x \leq_{C_j} y$ and $y \leq_{C_j} x)$.
\end{definition}
Informally, $\leq_{C_j}$ gives higher preference to domain individuals violating less typicality inclusions with higher rank. 
Definition \ref{total_preorder}  exploits Brewka's  $\#$ strategy  in DL context. In particular,  all  $x,y \not \in C_j^I$, $x \sim_{C_j} y$,
i.e., all elements not belonging to $C_j^I$ are assigned the same rank, the least one, as they trivially satisfy all the typical properties of $C_j$'s.
As, for a ranked knowledge base, the $\#$ strategy defines a total preorder \cite{Brewka04} and, for each  ${\cal T}_{C_j}$, we have applied this strategy to the materializations $C_j \sqsubseteq D$ of the typicality inclusions $\tip(C_j) \sqsubseteq D$ in the ranked TBox  ${\cal T}_{C_j}$, the relation $\leq_{C_j}$ is a total preorder on the domain $\Delta$.
Then, the strict preference relation $<_{C_j}$ is  a strict modular partial order,  i.e., an irreflexive, transitive and modular relation (where modularity means that: 
for all $x,y,z \in \Delta$, if $x <_{C_j} y$ then $x <_{C_j} z$ or $z <_{C_j} y$);  $\sim_{C_j}$ is an equivalence relation.

As $\elpb$ has the {\em finite model property} \cite{rifel}, we can restrict our consideration to interpretations $I$ with a finite domain.
In principle, we would like to consider, for each concept $C_j \in {\cal C}$, {\em all possible domain elements} compatible with the inclusions in $ {\cal T}_{strict}$, and compare them according to $\leq_{C_i}$ relation.
This leads us to restrict the consideration to models of  $ {\cal T}_{strict}$ that we call {\em canonical}, in analogy with the canonical models of rational closure \cite{dl2013}. 
For each concept $C$ occurring in $K$,   
let us consider a new concept name $\overline{C}$, (representing the negation of $C$) such that $C \sqcap \overline{C} \sqsubseteq \bot$. Let $\lingconc_K$ be the set of all such  $C$ and $\overline{C}$, and let ${\cal T}_{Constr}$ the set of all subsumptions $C \sqcap \overline{C} \sqsubseteq \bot$.
A set $\{ D_1,\ldots,D_m\}$ of concepts in $\lingconc_K$ is {\em consistent with $K$} if ${\cal T}_{Strict} \cup {\cal T}_{Constr} \not \models_{\elpb}  D_1 \sqcap  \dots \sqcap D_m \sqsubseteq \bot$. 

\begin{definition}\label{def-canonical-model-DL} 
Given a ranked knowledge base $K=\langle  {\cal T}_{strict}, {\cal T}_{C_1}, \ldots, {\cal T}_{C_k}, {\cal A}  \rangle$
an $\elpb$ interpretation $I=\langle \Delta, \cdot^I \rangle$ 
is {\em canonical}  for $K$ if  $I$ satisfies $ {\cal T}_{strict}$ and, 
for any set of concepts $\{ D_1,\ldots,D_m\} \subseteq \lingconc_K$ consistent with $K$, 
there exists  
a domain element $x \in \Delta$ such that,
for all $i=1, \ldots, m$, $x \in C^I$, if $D_i=C$, and $x \not \in C^I$, if $D_i=\overline{C}$.
\end{definition}
The idea is that, in a canonical model for $K$, any conjunction of concepts occurring in $K$, or their complements, when consistent with $K$, must have an instance in the domain. 
Existence of  canonical interpretations is guaranteed for knowledge bases 
which are consistent under the preferential (or ranked) semantics for typicality. 
$\elpb$ with typicality is indeed a fragment of the description logic $\shiq$ with typicality, for which existence of canonical models of consistent knowledge bases was proved \cite{GiordanoFI2018}.

In agreement with the preferential interpretations of typicality logics,
we further require that, if there is some $C_h$-element in a model, then there is at least one $C_h$-element satisfying all typicality inclusions for $C_h$ (i.e., a prototypical $C_h$-element).

\begin{definition} 
An $\elpb$ interpretation $I=\langle \Delta, \cdot^I \rangle$  is {\em $\tip$-compliant for $K$} if, I satisfies  ${\cal T}_{Strict}$ and, for all $C_h \in {\cal C}$  such that $C_h^I \neq \emptyset$, there is some $x \in C_h^I$ such that $x$ satisfies all  defeasible inclusions in ${\cal T}_{C_h}$. 
\end{definition}
In a canonical and  $\tip$-compliant  interpretation for $K$, for each $C_j \in {\cal C}$, the relation $\leq_{C_j}$  on the domain $\Delta$  provides a preferential interpretation for the typicality concept $\tip(C_j)$ 
as $min_{<_{C_j}}(C_j^I)$, in which all typical $C_j$ satisfy all typicality inclusions in ${\cal T}_{C_h}$.

Existence of a $\tip$-compliant canonical interpretation is not guaranteed for an arbitrary knowledge base.  
For instance,  a knowledge base whose typicality inclusions conflict with strict ones (e.g, $\tip(C_j) \sqsubseteq D$ and   $C_j \sqcap D \sqsubseteq \bot$) has  no $\tip$-compliant  interpretation. 
However, existence of  $\tip$-compliant interpretations is guaranteed for knowledge bases 
which are consistent under the preferential (or ranked) semantics for typicality  
(see Appendix A, Proposition \ref{prop:existence-Tcompliant}), 
 and consistency can be tested in polynomial time in Datalog \cite{SROEL_FI_2018}.

For a ranked knowledge base
$K=\langle  {\cal T}_{strict}, {\cal T}_{C_1}, \ldots, {\cal T}_{C_k}, {\cal A}  \rangle$, and a given $\elpb$ interpretation $I=\langle \Delta, \cdot^I \rangle$, 
 the  strict modular partial order relations $<_{C_1}, \ldots, <_{C_k}$ over $\Delta$, defined according to Definition \ref{total_preorder} above,
determine the relative typicality of domain elements w.r.t. each concept $C_j$. 
Clearly, the different preference relations $<_{C_j}$ do not need to agree, as seen in the introduction.

\section{Combining multiple preferences into a global preference}

We are interested in defining  a notion of typical $C$-element, and defining an interpretation of $\tip(C)$,
which works for all concepts $C$, not only for the distinguished concepts in ${\cal C}$.
This can be used to evaluate subsumptions of the form $\tip(C) \sqsubseteq D$ when $C$ does not belong to ${\cal C}$.
We address this problem by introducing a notion of {\em multipreference concept-wise} interpretation, 
which generalizes the notion of preferential interpretation \cite{KrausLehmannMagidor:90} by allowing multiple preference relations and, then, combining them in a single (global) preference. 
Let us consider the following example:

\begin{example} \label{exa:student}
Let $K$ be the ranked KB $\langle {\cal T}_{strict},  {\cal T}_{Employee}, {\cal T}_{Student}, {\cal T}_{PhDStudent}, {\cal A} \rangle$ (with empty ${\cal A} =\emptyset$), 
containting the strict inclusions:

$\mathit{Employee  \sqsubseteq  Adult}$ \ \ \ \ \ \ \ \ \ \ \ \ \ \ \   $\mathit{Adult  \sqsubseteq  \exists has\_SSN. \top}$  
 \ \ \ \ \ \ \ \ \ \ \   $\mathit{PhdStudent  \sqsubseteq  Student}$ 
 
$\mathit{Young  \sqcap NotYoung \sqsubseteq  \bot}$ \ \ \ \ \ \ \ \
$\mathit{\exists hasScholarship.\top  \sqcap Has\_no\_Scholarship \sqsubseteq  \bot}$

\noindent
The ranked TBox ${\cal T}_{Employee} =\{(d_1,0), (d_2,0)\}$ contains the defeasible inclusions:

$(d_1)$ $\mathit{\tip(Employee) \sqsubseteq NotYoung}$  \ \ \ \ \ \ \ \ \  \ \ \ \ \  $(d_2)$ $\mathit{\tip(Employee) \sqsubseteq \exists has\_boss.Employee}$

\noindent
the ranked TBox ${\cal T}_{Student}= \{(d_3,0),(d_4,1), (d_5,1)\}$ contains the defeasible inclusions:

$(d_3)$ $\mathit{\tip(Student) \sqsubseteq  \exists has\_classes.\top}$   \ \ \ \ \ \ \ \ \ \  $(d_4)$ $\mathit{\tip(Student) \sqsubseteq Young}$

$(d_5)$ $\mathit{\tip(Student) \sqsubseteq  Has\_no\_Scholarship}$

\noindent
and the ranked TBox ${\cal T}_{PhDStudent}=\{  (d_6, 0), (d_7,1)\}$ contains the inclusions:

$(d_6)$ $\mathit{\tip(PhDStudent) \sqsubseteq  \exists hasScholarship.Amount}$ \ \ \ \ \ \ \ \  $(d_7)$ $\mathit{\tip(PhDStudent) \sqsubseteq Bright}$

\end{example}
We might be interested to check whether typical Italian students are young or whether typical employed students are young.
This would require the typicality inclusions \linebreak $\mathit{\tip(Student \sqcap Italian) \sqsubseteq  Young}$ and $\mathit{\tip(Employee \sqcap Student) \sqsubseteq Young}$ to be evaluated. 
\linebreak
Nothing should prevent Italian students from being young 
(irrelevance). 
Also, we expect not to conclude 
 that typical employed students are young nor that they are not, as typical students and typical employees have conflicting properties concerning age. However, we would like to conclude that  typical employed students have a boss, have classes and have no scholarship, as they should inherit the properties of typical students and of typical employees which are not overridden 
 (i.e., there is no blocking of inheritance). 
 As PhD students are students, they should inherit  all the typical properties of Students, except having no scholarship, which is overridden by ($d_6$).

To evaluate conditionals $\tip(C) \sqsubseteq D$ for any concept $C$ we introduce a concept-wise multipreference interpretation, that combines the preference relations $<_{C_1}, \ldots, <_{C_k}$ into a single {\em (global)} preference relation $<$ and interpreting   $\mathit{\tip(C)}$ as $\mathit{(\tip(C))^I}=$ $ min_{<}(C^I)$.
The relation $<$ should be defined starting from the preference relations $<_{C_1}, \ldots, <_{C_k}$ also considering specificity.

Let us consider the simplest notion of {\em specificity} among concepts, based on the subsumption hierarchy  
(one of the notions considered  for ${\cal DL}^N$  \cite{bonattiAIJ15}).

\begin{definition}[Specificity] \label{specificity}
Given a ranked $\elpb$ knowledge base
$K=\langle  {\cal T}_{strict},$ $ {\cal T}_{C_1}, \ldots,$ $ {\cal T}_{C_k}, {\cal A}  \rangle$ over the set of concepts ${\cal C}$, 
and given  two concepts  $C_h, C_j \in {\cal C}$,
 $C_h$ is {\em more specific than}  $C_j$ (written $C_h \succ C_j$) 
  if $ {\cal T}_{strict} \models_{\elpb} C_h \sqsubseteq C_j$  and $ {\cal T}_{strict} \not\models_{\elpb} C_j \sqsubseteq C_h$. 
\end{definition}
Relation $\succ$ is irreflexive and transitive \cite{bonattiAIJ15}.  Alternative notions of specificity can be used, based, for instance, on the rational closure ranking.

We are  ready to define a notion of multipreference interpretation. 
Let a relation $<_{C_i}$ be {\em well-founded} when there is no infinitely-descending chain of domain elements $x_1<_{C_i} x_0, \; x_2<_{C_i} x_1, \; x_3<_{C_i} x_2, \ldots $.
 \begin{definition}[concept-wise multipreference interpretation]\label{def-multipreference-int}  
A  (finite) concept-wise multipreference interpretation (or cw$^m$-interpretation) is a tuple $\emme= \langle \Delta, <_{C_1}, \ldots,<_{C_k}, <, \cdot^I \rangle$
such that: (a)  $\Delta$ is a non-empty domain;   
\begin{itemize} 

\item[(b)] for each $i=1,\ldots, k$, $<_{C_i}$ is an irreflexive, transitive, well-founded and modular relation over $\Delta$;

\item[(c)]  the (global) preference relation $<$ is defined from  $<_{C_1}, \ldots,<_{C_k}$ as follows: \ \ 
\begin{align*}
x <y  \mbox{ iff \ \ } 
(i) &\  x <_{C_i} y, \mbox{ for some } C_i \in {\cal C}, \mbox{ and } \\
(ii) & \ \mbox{  for all } C_j\in {\cal C}, \;  x \leq_{C_j} y \mbox{ or }  \exists C_h (C_h \succ C_j  \mbox{ and } x <_{C_h} y )
\end{align*}
\item[(d)]  $\cdot^I$ is an interpretation function, as defined in $\elpb$ interpretations 
(see Section \ref{sec:ALC}),
with the addition that, for typicality concepts, we let: 
$(\tip(C))^I = min_{<}(C^I)$,
where $Min_<(S)= \{u: u \in S$ and $\nexists z \in S$ s.t. $z < u \}$.

\end{itemize}
\end{definition}
Notice that the
 relation $<$ is defined from $<_{C_1}, \ldots,<_{C_k}$  based on a {\em modified} Pareto condition:
$x< y$ holds if there is at least a $C_i \in {\cal C}$ such that $ x <_{C_i} y$ and,
 for all $C_j \in {\cal C}$,   either $x \leq_{C_j} y$ holds or, in case it does not, there is some $C_h$ more specific than $C_j$ such that $x <_{C_h} y$ (preference  $<_{C_h}$ in this case overrides $<_{C_j}$).
 For instance, in Example \ref{exa:student}, for two domain elements $x, y$, both instances of $\mathit{PhDStudent, Student,}$ $\mathit{ \exists has\_Classes.\top,Young}$, and such that $x$ is instance of  $\mathit{has\_no\_Scholarship}$, while $\mathit{y}$ is not, we have that $\mathit{x<_{Student} y}$ and $\mathit{y<_{PhDStudent} x}$. As $\mathit{PhDStudent}$ is more specific than $\mathit{Student}$, globally we get $\mathit{y< x}$.
We can prove  
the following result. 
\begin{proposition} \label{properties_global_pref}
Given a cw$^m$-interpretation $\emme= \langle \Delta, <_{C_1}, \ldots,<_{C_k}, <, \cdot^I \rangle$,
 relation $<$ is an irreflexive, transitive and well-founded relation.
\begin{proof} 
Well-foundedness of $<$ is immediate from the restriction to finite models.

To prove irreflexivity and transitivity of $<$, we exploit the fact that each $<_{C_i}$ is assumed to be an irreflexive, transitive, well-founded and modular relation on $\Delta$ (see Definition \ref{def-multipreference-int}).
Irreflexivity of $<$ follows easily from the irreflexivity of the  $<_{C_h}$'s as, for $x<x$ to hold, $x <_{C_h} x$ should hold for some $C_h$, which is not possible as $<_{C_h}$ is irreflexive. 

To prove transitivity of $<$, we prove transitivity of $\leq$ defined as follows:
\begin{align*}
x \leq y  \mbox{ iff } \mbox{ for all }  C_j \in {\cal C} \ \ 
(i) &\  x \leq_{C_j} y,  \mbox{ or }\\
(ii) & \mbox{ exists } C_h\in {\cal C}, \; (C_h \succ C_j  \mbox{ and } x <_{C_h} y )
\end{align*}
It is easy to see that the global preference relation $<$ introduced in point (c) of Definition \ref{def-multipreference-int} can be equivalently defined as: 
$x <y  \mbox{ iff \ } (x \leq y$ and not $y \leq x $). Transitivity of $<$ follows from transitivity of $\leq$.

To prove transitivity of $\leq$, let us assume that $x\leq y$ and $y \leq z$ hold. We prove that $x\leq z$ holds by proving that:
for all $C_j \in {\cal C}$,  $x  \leq_{C_j} z$ holds (call this case $(i)^{x,z}_{C_j}$) or  there is a $C_h$ such that  $C_h \succ C_j$  and $x <_{C_k} z$ (call this case $(ii)^{x,z}_{C_j}$)).

As $x \leq y$ holds, for all $C_j \in {\cal C}$,  $x  \leq_{C_j} y$ (case $(i)_1$) or 
there is a $C_h$ such that  $C_h \succ C_j$ and $x <_{C_k} y$ (case $(ii)_1$).
Similarly, as $y \leq z$ holds,  for all $C_j \in {\cal C}$, $y \leq_{C_j} z$ (case $(i)_2$) or  there is a $C_r$ such that  $C_r \succ C_j$ and $x <_{C_r} y$ (case $(ii)_2$).
Let us consider the different possible combination of cases in which $x \leq y$ and $y \leq z$ hold, for each $C_j$:

Case $(i)_1$-$(i)_2$: In this case, $x  \leq_{C_j} y$ and $y \leq_{C_j} z$ hold. By transitivity of $\leq_{C_j}$,  $x  \leq_{C_j} z$ (i.e., condition $(i)^{x,z}_{C_j}$ is satisfied).

Case $(ii)_1$-$(i)_2$:  In this case, $y \leq_{C_j} z$ ,  
and  there is a $C_{h}$ such that  $C_{h} \succ C_j$ and $x <_{C_{h}} y$. Let $C_{h}$ be maximally specific among all concepts $C \in {\cal C}$ such  that $C \succ C_j$ and $x <_{C} y$.

If $y \leq_{C_h} z$ is the case, from $x <_{C_h} y$, we get $x <_{C_h} z$, so that: 
there is a $C_h$ such that  $C_h \succ C_j$ and $x <_{C_h} z$, i.e., condition $(ii)^{x,z}_{C_j}$ is satisfied. 
Otherwise, if $z <_{C_h} y$,  as $y \leq z$, there must be a $C_{r}$ such that  $C_{r} \succ C_{h}$ and $y <_{C_{r}} z$.
If $x \leq_{C_{r}} y$, we can conclude that $x <_{C_{r}} z$. From $C_{r} \succ C_{h} \succ C_j$, by transitivity,  $C_{r} \succ C_j$,
i.e. condition $(ii)^{x,z}_{C_j}$ is satisfied.
If $x \leq_{C_{r}} y$ does not hold, i.e. $y <_{C_{r}} x$, as $x \leq y$, there must be a $C_w \in {\cal C}$ such that $C_w \succ C_r$ and $x <_{C_w} y$.
However, this is not possible, as it would be  $C_w \succ C_r \succ C_{h} \succ C_j$ and
 we have chosen $C_h$ to be maximally specific among the concepts $C \in {\cal C}$ such  that $C \succ C_j$ and $x <_{C} y$, a contradiction.
  
The cases $(i)_1$-$(ii)_2$ and $(ii)_1$-$(ii)_2$  can be proved in a similar way.
\end{proof}
\end{proposition}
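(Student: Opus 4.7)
My plan is to handle the three properties of $<$ separately, dispatching well-foundedness and irreflexivity quickly and concentrating the real work on transitivity. Since $\Delta$ is assumed finite, well-foundedness reduces to irreflexivity: no irreflexive relation on a finite set can admit an infinite descending chain. Irreflexivity itself is immediate from clause (i) of the definition of $<$, because a witness to $x < x$ would require some $C_i \in \mathcal{C}$ with $x <_{C_i} x$, contradicting the assumed irreflexivity of each $<_{C_i}$.

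For transitivity I would follow the standard trick of introducing an auxiliary ``weak'' relation $\leq$ defined by: $x \leq y$ iff for every $C_j \in \mathcal{C}$ either $x \leq_{C_j} y$, or there exists $C_h \succ C_j$ with $x <_{C_h} y$. The first bookkeeping step is to verify that $x < y$ is equivalent to $x \leq y$ together with $\lnot(y \leq x)$; this uses modularity of each $<_{C_j}$ to convert the failure of $y \leq_{C_j} x$ into $x <_{C_j} y$. Given that equivalence, transitivity of $<$ reduces to transitivity of $\leq$ by the usual contrapositive argument: from $x \leq y$, $y \leq z$, and the negations of $y \leq x$ and $z \leq y$, transitivity of $\leq$ yields $x \leq z$, and $z \leq x$ would combine with $y \leq z$ to give $y \leq x$, a contradiction.

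The substantive part is transitivity of $\leq$. Fixing $C_j \in \mathcal{C}$, I would split each of $x \leq y$ and $y \leq z$ into the two alternatives (i) direct $\leq_{C_j}$-comparability and (ii) override by some more specific $C_h \succ C_j$. The pure cases $(i,i)$ and $(ii,ii)$ are routine: the first invokes transitivity of $\leq_{C_j}$, the second chains specificity using transitivity of $\succ$. The delicate cases are $(ii,i)$ and $(i,ii)$, where the override appears on only one side. For $(ii,i)$, the plan is to pick $C_h$ \emph{maximally specific} among concepts $C \in \mathcal{C}$ with $C \succ C_j$ and $x <_C y$ (such a maximum exists because $\mathcal{C}$ is finite and $\succ$ is irreflexive and transitive), and then do a subordinate split on whether $y \leq_{C_h} z$ or $z <_{C_h} y$. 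In the first subcase we immediately get $x <_{C_h} z$, witnessing the override for $C_j$; in the second, $y \leq z$ supplies some $C_r \succ C_h$ with $y <_{C_r} z$, and a further split on whether $x \leq_{C_r} y$ either produces the desired witness at $C_r$ or, via $y <_{C_r} x$ and $x \leq y$, yields some $C_w \succ C_r \succ C_h$ with $x <_{C_w} y$, contradicting the maximality of $C_h$.

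I expect the main obstacle to be exactly this iterated case split: without the maximality choice the specificity overrides could in principle chain indefinitely. Finiteness of $\mathcal{C}$ together with transitivity and irreflexivity of $\succ$ is precisely what makes a maximal element available and lets the induction terminate. The symmetric case $(i,ii)$ should yield to the same argument with the roles of the two $\leq$-hypotheses swapped.
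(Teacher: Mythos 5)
Your proposal is correct and follows essentially the same route as the paper's proof: well-foundedness from finiteness, irreflexivity from clause (i), transitivity of $<$ reduced to transitivity of the auxiliary relation $\leq$ via the equivalence $x<y$ iff ($x\leq y$ and not $y\leq x$), and the same case analysis with a maximally specific $C_h$ for the mixed case. The only slight imprecision is calling the double-override case routine by ``chaining $\succ$'' --- the two witnessing concepts need not coincide, so that case also needs the maximality argument --- but this is exactly the case the paper itself dismisses as provable ``in a similar way,'' and your machinery covers it.
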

In a cw$^m$-interpretation we have assumed each $<_{C_j}$ to be {\em any} irreflexive, transitive, modular and well-founded relation.
In a cw$^m$-model of $K$, the preference relations $<_{C_j}$'s will be defined from the  ranked TBoxes ${\cal T}_{C_j}$'s according to Definition \ref{total_preorder}.
\begin{definition}[cw$^m$-model of $K$]\label{cwm-model} 
Let 
$K=\langle  {\cal T}_{strict},$ $ {\cal T}_{C_1}, \ldots,$ $ {\cal T}_{C_k}, {\cal A}  \rangle$ be a ranked $\elpb$ knowledge base over  ${\cal C}$   
and 
$I=\langle \Delta, \cdot^I \rangle$ an $\elpb$ 
interpretation for $K$.
A {\em concept-wise multipreference model} (or {\em cw$^m$-model})  of $K$ is  a cw$^m$-interpretation ${\emme}=\langle \Delta,<_{C_1}, \ldots, <_{C_k}, <, \cdot^I \rangle$ 
such that: 
for all $j= 1, \ldots, k$,  $<_{C_j}$ is 
defined from  ${\cal T}_{C_j}$ and $\cdot^I$, according to Definition \ref{total_preorder};
$\emme$  satisfies  all strict inclusions inclusions in $ {\cal T}_{strict}$ 
and all assertions in ${\cal A}$.
\end{definition}
As the preferences $<_{C_j}$'s,  defined according to Definition \ref{total_preorder}, 
are irreflexive, transitive, well-founded and modular relations over $\Delta$, a cw$^m$-model $\emme$ is indeed a cw$^m$-interpretation. By definition $\emme$ satisfies all strict inclusions  
and assertions in $K$, but is not required to satisfy all typicality inclusions $\tip(C_j) \sqsubseteq  D$ in $K$,
unlike in preferential typicality logics \cite{lpar2007}. 

Consider, in fact, a situation in which typical birds are fliers and typical fliers are birds ($\tip(B) \sqsubseteq F$ and $\tip(F) \sqsubseteq B$).  In a  cw$^m$-model two domain elements $x$ and $y$, which are both birds and fliers, might be incomparable wrt $<$, as $x$ is more typical than $y$ as a bird, while $y$ is more typical than $x$ as a flier, even if one of them is minimal wrt $<_{Bird}$ and the other is not. In this case, they will be both minimal wrt $<$. In preferential logics, we would conclude that $\tip(B) \equiv \tip(F)$, which is not the case under the cw$^m$-semantics. This implies that the notion of cw$^m$-entailment (defined below) is not stronger  
than preferential entailment. 
It is also not weaker as, 
for instance, in Example \ref{exa:student}, cw$^m$-entailment allows to conclude that typical employed students have a boss, have classes and no scholarship (although defaults $(d_1)$ and $(d_4)$ are conflicting), while
neither preferential entailment nor the rational closure would allow such conclusions; cw$^m$-entailment does not suffer from inheritance blocking, 
and is then incomparable with preferential entailment and with entailment under rational closure, being neither weaker nor stronger.

The notion of 
cw$^m$-entailment  
exploits canonical and $\tip$-compliant cw$^m$-models of $K$.
A cw$^m$-model  $\emme= \langle \Delta, <_{C_1}, \ldots,<_{C_k}, <, \cdot^I \rangle$ is {\em canonical ($\tip$-compliant)} for $K$ if the $\elpb$ interpretation  $\langle \Delta, \cdot^I \rangle$ is canonical ($\tip$-compliant) for $K$.
\begin{definition}[cw$^m$-entailment] \label{cwm-entailment}
An inclusion $\tip(C) \sqsubseteq C_j$ is cw$^m$-entailed  
from $K$ (written $K \models_{cw^m} \tip(C) \sqsubseteq C_j$) if
$\tip(C) \sqsubseteq C_j$ is satisfied in all canonical and $\tip$-compliant cw$^m$-models  
$\emme$ of $K$.
\end{definition}
It can be proved that cw$^m$-entailment satisfies the KLM postulates of a preferential consequence relation (see Appendix A, Proposition \ref{prop:KLM_properties}).

\section{Reasoning under the cw-multipreference semantics}
   
In this section we consider the problem of checking cw$^m$-entailment of a typicality subsumption  $\tip(C) \sqsubseteq D$ as a problem of determining preferred answer sets. Based on this formulation, that we prove to be sound and complete, we show that the problem is in $\Pi^p_2$. We exploit \emph{asprin} \cite{BrewkaAAAI15} to compute preferred answer sets. The proofs for this section can be found in Appendix C.

In principle, for checking $\tip(C) \sqsubseteq D$ 
we would need to consider all possible typical $C$-elements in all possible canonical and $\tip$-compliant cw$^m$-model of $K$, and verify whether they are all instances of $D$.
However, we will prove that  
it is sufficient to consider,
among all the (finite) cw$^m$-models of $K$, the polynomial $\elpb$ models that we can construct using the $\elpb$ fragment of the materialization calculus for $\sroel$  \cite{KrotzschJelia2010}, by considering all alternative interpretations for a distinguished element $aux_C$, representing a prototypical $C$-element. In preferred models, which minimize the violation of typicality inclusions by $aux_C$, it indeed represents a typical $C$-element. An interesting result is that neither we need to consider all the possible interpretations for constants in the model nor to minimize violation of typicalities for them. Essentially, when evaluating the properties of typical employed students we are not concerned with the typicality (or atypicality) of other constants in the model (e.g., with typical cars, with typical birds, and with typical named individuals). Unlike a previous semantics by Giordano and Theseider Dupr{\'e} \shortcite{TPLP2016}, which generalizes rational closure by allowing typicality concepts on the rhs of inclusions, 
we are not required to consider all possible alternative interpretations and ranks of individuals in the model. We will see, however, that we do not loose solutions (models) in this way.

In the following we first describe how answer sets of a base program, corresponding to cw$^m$-models of $K$, are generated. Then we describe
how preferred models can be selected, where $\mathit{aux_C}$ represent a typical $C$-element.

We will assume that assertions ($C(a)$ and $r(a,b)$) are represented using nominals as inclusions
(resp., $\{a\} \sqsubseteq A$ and $\{a\} \sqsubseteq \exists R. \{b\}$), where a nominal $\{a\}$ is a concept containing a single element and $(\{a\})^I=\{a^I\}$.
We also assume that the knowledge base $K$ is in {\em normal form} \cite{rifel}, where a typicality inclusion  $\tip(B) \sqsubseteq C$ is in normal form when $B, C \in N_C$  \cite{TPLP2016}. Extending the results in \cite{KrotzschJelia2010}, it can be proved that, given a KB, a semantically equivalent KB in normal form (over an extended signature) can be computed in linear time. We refer to a previous paper \cite{SROEL_FI_2018} for the details on normalization.

The base program $\Pi(K,C,D)$ for the (normalized) knowledge base $K$ and typicality subsumption 
$\tip(C) \sqsubseteq D$ is composed of three parts, 
$\Pi(K,C,D)= \Pi_{K} \cup \Pi_{IR} \cup \Pi_{C,D}$.

$\Pi_{K}$ is the representation of $K$ in Datalog \cite{KrotzschJelia2010}, where, to keep a DL-like notation, we do not follow the convention where variable names start with uppercase;
in particular, $A$, $C$, $D$ and $R$, are intended as ASP constants corresponding to the same class/role names in $K$. In this representation,
$\mathit{nom(a)}$, $\mathit{cls(A)}$, $\mathit{rol(R)}$ are used for
$\mathit{a \in N_I}$ , $\mathit{A \in N_C}$, $\mathit{R \in N_R}$, and, for example,
	$\mathit{subClass(a,C)}$, $\mathit{subClass(A,C)}$ are used for $C(a)$,  $A \sqsubseteq C$.
Additionally, $\mathit{subTyp(C,D,N)} $ is used for $ \mathit{T(C) \sqsubseteq D} $ having rank $N$,
and the following
definitions for distinguished concepts, typical properties, and valid ranks, will be used in defining preferences:
\begin{tabbing}
$ \mathit{dcls(C) \leftarrow  subTyp(C,D,N)} $\\
$ \mathit{tprop(C,D) \leftarrow subTyp(C,D,N)} $ \\
$ \mathit{validrank(C,N) \leftarrow subTyp(C,D,N)} $
\end{tabbing}
For each distinguished concept $C_i$, $ \mathit{auxtc(aux\_Ci,Ci)}$ is included, where 
$\mathit{aux\_Ci}$ is an auxiliary individual name.  Other auxiliary constants (one for each inclusion $A \sqsubseteq \exists R.B$) are needed \cite{KrotzschJelia2010} to deal with existential rules.

$\Pi_{IR}$ contains the subset  of the
inference rules (1-29) for instance checking \cite{KrotzschJelia2010} that is relevant for $\elpb$ (reported in Appendix B),
for example	$\mathit{inst(x,z) \leftarrow}$ $\mathit{ subClass(y,z),inst(x,y)} $; for
$\bot$, an additional rule is used: $ \mathit{\leftarrow bot(z), inst(x,z)}$. 
Additionally, $\Pi_{IR}$ contains the version of the same rules for subclass checking 
(where $\mathit{inst\_sc(A,B,A)}$ represents $A  \sqsubseteq B$ \cite{KrotzschJelia2010}),
and then the following rule encodes specificity  $C_h \succ C_j$:
\begin{tabbing}
$\mathit{morespec(Ch,Cj) \leftarrow dcls(Ch),dcls(Cj),inst\_sc(Ch,Cj,Ch), not  ~ inst\_sc(Cj,Ch,Cj)}$ 
\end{tabbing}

$\Pi_{IR}$ also contains the following rules:

\begin{tabbing}

(a)	$ \mathit{
\{inst(aux_C,D)\}\ \leftarrow dcls(Ci),inst(aux_C,Ci),tprop(Ci,D)
	} $\\

(b)	$ \mathit{
inst(Y,Ci) \leftarrow auxtc(Y,Ci), inst(X,Ci)
	} $\\
	
(c)	$ \mathit{
typ(Y,Ci) \leftarrow auxtc(Y,Ci),inst(Y,Ci)
	} $\\

(d)	$ \mathit{
inst(Y,D) \leftarrow subTyp(Ci,D,N),typ(Y,Ci)
	} $

\end{tabbing}
Rule (a)  generates alternative answer sets (corresponding to different interpretations) where $\mathit{aux_C}$ may have the typical
properties of the concepts it belongs.
The constant $aux\_Ci$, such that $auxtc(aux\_Ci,Ci)$ holds, represents a typical $Ci$ (a minimal element wrt. $\leq_{C_i}$) only in case it is an instance of $C_i$ (i.e., $\mathit{inst(aux\_Ci, Ci)}$ holds).
Rule (b) establishes that, if there is an instance $x$ of concept $C_i$ in the interpretation, then  $aux\_Ci$ must be an instance of $C_i$  (it models $\tip$-compliance) and, 
by rule (c), $aux\_Ci$ is a typical instance of $C_i$, i.e.,
it is minimal wrt. $\leq_{C_i}$ among $C_i$-elements in the interpretation at hand. 
By rule (d), a typical instance of $C_i$ has all typical properties of $C_i$.
The rules (b)-(d) only allow to derive conclusions involving $aux\_Ci$ constants.

$\Pi_{C,D}$ contains (if necessary) normalized axioms defining $C,D$ in $\tip(C) \sqsubseteq D$ in terms of other concepts 
(e.g., replacing $\mathit{\tip(Employee \sqcap Student) \sqsubseteq Young}$ with $\mathit{\tip(A) \sqsubseteq Young}$ and $\mathit{A \sqsubseteq Employee}$,  $\mathit{A \sqsubseteq Student}$ and $\mathit{Employee \sqcap Student \sqsubseteq A}$)
plus the facts
$ \mathit{auxtc(aux_C,C)}$, $ \mathit{nom(aux_C)}$, $ \mathit{inst(aux_C,C).}$

\begin{proposition} \label{AS to models}
Given a normalized ranked knowledge base $K=\langle  {\cal T}_{strict},$ $ {\cal T}_{C_1}, \ldots,$ $ {\cal T}_{C_k}, {\cal A}  \rangle$ over the set of concepts ${\cal C}$,  
and a (normalized) subsumption  $C \sqsubseteq D$: 
\begin{itemize}
\item[(1)]
if there is an answer set $S$ of the ASP program $\Pi(K,C,D)$,  
such that $inst(aux_C, D) \not \in S$,
then there is a $\tip$-compliant cw$^m$-model $\emme=\langle \Delta, <_{C_1}, \ldots, <_{C_k}, <, \cdot^I \rangle $ for $K$  
that falsifies the subsumption  $C \sqsubseteq D$.

\item[(2)]
if there is a $\tip$-compliant cw$^m$-model $\emme=\langle \Delta, <_{C_1}, \ldots, <_{C_k}, <, \cdot^I \rangle $ of $K$  
that falsifies the subsumption  $C \sqsubseteq D$,
then there is an answer set $S$ 
of $\Pi(K,C,D)$, 
such that $\mathit{ inst(aux_C, D) \not \in S}$.
\end{itemize}
\end{proposition}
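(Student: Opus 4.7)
\textbf{Plan for Proposition \ref{AS to models}.} The proof splits into the two claimed directions, and both hinge on the soundness and completeness of Kr\"otzsch's Datalog materialization calculus for $\sroel$, restricted to $\elpb$, combined with a careful correspondence between the ASP choice structure introduced by rule~(a) and the possible interpretations of the distinguished element $aux_C$. The central technical work lies in the $\tip$-compliance arguments.

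For direction~(1), starting from an answer set $S$ of $\Pi(K,C,D)$ with $inst(aux_C,D)\notin S$, I take as domain $\Delta$ the set of individual constants (named and auxiliary) appearing in $S$ and define $A^I=\{x \mid inst(x,A)\in S\}$, with the analogous definition for roles using the derived binary atoms. Soundness of the materialization calculus ensures $\cdot^I$ satisfies every strict inclusion in ${\cal T}_{strict}$ and every ABox assertion. The preferences $<_{C_j}$ are then obtained via Definition \ref{total_preorder} and the global $<$ via Definition \ref{def-multipreference-int}, so that Proposition \ref{properties_global_pref} guarantees the resulting tuple is a cw$^m$-model. Since $inst(aux_C,C)$ is a fact in $\Pi_{C,D}$ while $inst(aux_C,D)\notin S$, we have $aux_C\in C^I\setminus D^I$, falsifying $C\sqsubseteq D$. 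The crucial verification is $\tip$-compliance: whenever $C_h^I\neq\emptyset$, rule~(b) forces $aux\_C_h\in C_h^I$, rule~(c) marks it as typical via $typ(aux\_C_h,C_h)$, and rule~(d) propagates every typicality consequent of ${\cal T}_{C_h}$ to it, so $aux\_C_h$ witnesses the compliance condition for each distinguished concept.

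For direction~(2), given a $\tip$-compliant cw$^m$-model $\emme$ falsifying $C\sqsubseteq D$, I pick a witness $x^*\in C^I\setminus D^I$ and, for every distinguished $C_h$ with $C_h^I\neq\emptyset$, a prototypical $y_h\in C_h^I$ that satisfies every typicality inclusion in ${\cal T}_{C_h}$ (provided by $\tip$-compliance). Mapping $aux_C\mapsto x^*$, $aux\_C_h\mapsto y_h$, each named $a\mapsto a^I$, and the existential auxiliaries of $\Pi_K$ to suitable witnesses in $\emme$, I let $S$ contain $inst(c,E)$ iff the element interpreting $c$ lies in $E^I$, and analogously for role atoms. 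The choice rule~(a) is resolved so as to commit exactly to those typicality consequents that $x^*$ satisfies in $\emme$; completeness of the materialization calculus then ensures all atoms forced by the non-choice rules of $\Pi_K\cup\Pi_{IR}$ are included, while soundness forbids spurious atoms, so in particular $inst(aux_C,D)\notin S$ because $x^*\notin D^I$. A standard stability check then confirms $S$ is an answer set of $\Pi(K,C,D)$.

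The hardest step is the stability argument in direction~(2): one must verify that the choice made for rule~(a) is globally consistent and that the interaction of the materialization rules with the forcing rules~(b)--(d) does not derive $inst(aux_C,D)$ via the other auxiliary constants or named individuals. The key fact exploited here is that rules~(a)--(d) only propagate consequences through the auxiliaries designated as typical instances through $auxtc$, and never push additional typicality consequents onto $aux_C$ beyond those selected by the choice rule. A parallel, easier point in direction~(1) is the well-definedness of each $<_{C_j}$ on the full domain $\Delta$, including existential auxiliaries, which follows directly from Definition \ref{total_preorder} since every non-$C_j$-element trivially satisfies all typicality inclusions for $C_j$ and thus receives the minimal rank.
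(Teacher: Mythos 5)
Your overall architecture matches the paper's: direction (1) builds an interpretation from the atoms of $S$ and reads off $\tip$-compliance from rules (b)--(d), and direction (2) resolves the choice rule (a) according to a witness $x^*\in C^I\setminus D^I$ and uses the soundness of Kr\"otzsch's calculus to argue that $inst(aux_C,D)$ is not derivable. Two points, however, deserve attention. First, a minor one in direction (1): the paper does not take the raw constants as the domain but quotients them by the equivalence $\approx$ induced by atoms $inst(c,d)$ with $d\in N_I$ (with two supporting lemmas showing $\approx$-equivalent constants carry the same atoms), precisely so that nominal-based merging of $aux_C$ with a named individual is handled; the falsification argument then splits into the case where $aux_C$ survives as its own element and the case where it collapses onto some $[d]$. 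Your construction silently assumes $aux_C$ is always a fresh domain element.

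The genuine gap is in direction (2). You define $S$ by ``$inst(c,E)\in S$ iff the element interpreting $c$ lies in $E^I$'', i.e., semantically from $\emme$. Such an $S$ is in general \emph{not} an answer set: a ($\tip$-compliant, let alone canonical) model typically makes many atoms true that are not derivable from $\Pi(K,C,D)$ together with the choice made for rule (a) --- for instance $inst(a,\mathit{Young})$ for a named individual $a$ that merely happens to be young in $\emme$ --- and these atoms are unsupported, so stability fails. The claim that ``soundness forbids spurious atoms'' inverts the direction of soundness: soundness (Kr\"otzsch's Lemma 1, derivable implies entailed) is what shows that $inst(aux_C,D)$ is \emph{not} in the deductive closure, because $\emme$ extended with interpretations for the new constants is a model in which $x^*\notin D^I$; it cannot show that every atom true in $\emme$ is derivable. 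The paper instead takes $S$ to be the Datalog deductive closure of the strict part of the program augmented with the explicitly chosen fact sets $F_0$ (typicality consequents of $aux_C$ matching $x^*$), $F_1$ and $F_2$ (instance and $typ$ atoms for the $aux_{C_j}$, justified by $\tip$-compliance of $\emme$), proves consistency of each augmentation by exhibiting extended models $\emme'$, $\emme''$, and only then checks rule satisfaction and supportedness. Your proof needs this syntactic definition of $S$ for the stability argument you correctly identify as the crux to go through.
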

We exploit the idea of identifying the minimal $C$-elements in a canonical cw$^m$-model of $K$, as the $aux_C$ elements of  the {\em preferred} answer sets of $\Pi(K,C,D)$.
\begin{definition}
Let $S$ and $S'$ be answer sets of $\Pi(K,C,D)$.
{\em $S'$ is preferred to $S$} if $aux_C$ in $S'$ (denoted as $aux_C^{S'}$) is globally preferred to  $aux_C$ in $S$ (denoted as $aux_C^S$), 
that is, $aux_C^{S'} < aux_C^S$, 
defined according to Definition \ref{def-multipreference-int}, point (c), provided that
 relations  $aux_C^{S'} \leq_{C_j} aux_C^S$  are defined according to Definition \ref{total_preorder}, by letting:
\begin{quote}
$\mathit{{\cal T}^l_{C_i}(aux_C^{S})} =\{ \mathit{B \mid \;  inst(aux_C, C_i) \not\in S}$ 
or $\mathit{inst(aux_C, B)} \in S$, 
for $\tip(C_i) \sqsubseteq B \in K $\},  
	
\end{quote}
i.e., $\mathit{{\cal T}^l_{C_i}(aux_C^{S})}$ contains the $B$'s such 
that $C_i \sqsubseteq B$ is satisfied in $S$
for some typicality inclusion $\tip(C_i) \sqsubseteq B$ in $K$; 
and similarly for $S'$.
The strict relation $aux_C^{S'} <_{C_j} aux_C^S$ is defined accordingly.
\end{definition}
Essentially, we compare $S$ and $S'$ identifying the concepts of which $aux_C$ is an instance in $S$ and in $S'$ and evaluating which defaults are satisfied for $aux_C$ in $S$ and in $S'$, using the same criteria used for comparing domain elements in Section  \ref{sez:RC}.

The selection of preferred answer sets, the ones where $aux_C$ is in $min_{<}(C^I)$, and then in $(\tip(C))^I$, can be done in \emph{asprin} with the following preference specification: 
\begin{tabbing}	
	$ \mathit{\#preference(p,multipref)\{ dcls(Ci) : dcls(Ci) ; morespec(Ci,Cj) : dcls(Ci),dcls(Cj) ; } $ \\
	~ ~ ~ ~ ~ ~ ~ ~ $ \mathit{ inst(auxC,E) : tprop(Ci,E), dcls(Ci) ; subTyp(Ci,E,R) : subTyp(Ci,E,R) ; } $ \\
	~ ~ ~ ~ ~ ~ ~ ~ $ \mathit{ validrank(Ci,R) : validrank(Ci,R) \} } $ \\
	$ \mathit{\#optimize(p)}$	
\end{tabbing}
requiring optimization wrt $p$ which is a preference of type $\mathit{multipref}$, a preference type defined by the preference program below (exploiting the fact that \emph{asprin}, among other things, generates from the specification a fact $ \mathit{preference(p,multipref)}$).

In \emph{asprin} preference programs,
defining whether an answer set $S$ is preferred to $S'$ according to a preference $P$ amounts
to defining a predicate $better(P)$ for the case where $P$ is of the type being defined; 
the predicates $holds$ and $holds'$ are used to check whether the atoms in the preference specification hold in $S$ and $S'$, respectively. 
In the following,
$\mathit{better(p), bettereqwrt(Ci), betterwrt(Ci)}$, correspond to $<$, $\leq_{C_i}$, $<_{C_i}$, respectively, for $aux_C^{S}$ and $aux_C^{S'}$, comparing what $holds$ for $aux_C$ to what $holds'$ for it; $\mathit{moreprop}$ and $\mathit{samenumprop}$ verify whether more (or the same number of ) typicality inclusions
of rank $R$ are satisfied by $aux_C$ in $S$ wrt $S'$:

\begin{tabbing}
	$ \mathit{\#program ~ preference(multipref)}$
	\\
	
	$ \mathit{
		better(P) \leftarrow
		preference(P,multipref),
		holds(dcls(Ci)), } $ \\
	~ ~ ~ ~ ~ ~ ~ ~ $ \mathit{ betterwrt(Ci),
		noattack(Cj) : holds(dcls(Cj)) } $
	\\ 
	
	$ \mathit{
		noattack(Cj) \leftarrow
		holds(dcls(Cj)),
		bettereqwrt(Cj) } $
	\\
	
	$ \mathit{
		noattack(Cj) \leftarrow
		holds(dcls(Cj)),
		holds(dcls(Ch)),
		holds(morespec(Ch,Cj)),
		betterwrt(Ch)} $
	\\
	
	$ \mathit{bettereqwrt(Ci) \leftarrow betterwrt(Ci)} $
	\\
	
	$ \mathit{
		bettereqwrt(Ci) \leftarrow
		holds(dcls(Ci)),
		samenumprop(Ci,R) : holds(validrank(Ci,R))} $
	\\
	
	$ \mathit{
		betterwrt(Ci) \leftarrow
		holds(dcls(Ci)),
		moreprop(Ci,R),} $ \\
	~ ~ ~ ~ ~ ~ ~ ~ $ \mathit{
		samenumprop(Ci,R1) : holds(validrank(Ci,R1)),R1>R} $
	\\	
	
	$ \mathit{	
		moreprop(Ci,R) \leftarrow
		holds(validrank(Ci,R)), }$ \\
	~ ~ ~ ~ ~ ~ ~ ~ $ \mathit{
		\#sum \{ -1,E : sat(auxC,Ci,E), holds(subTyp(Ci,E,R)); } $ \\
	~ ~ ~ ~ ~ ~ ~ ~ ~ ~ ~ ~ ~ $ \mathit{
		1,E : sat1(auxC,Ci,E), holds(subTyp(Ci,E,R)) \} -1} $
	\\
	
	$ \mathit{
		sat(auxC,Ci,E) \leftarrow holds(X), X=inst(auxC,E),holds(subTyp(Ci,E,R))} $
	\\
	
	$ \mathit{
		sat(auxC,Ci,E) \leftarrow not ~ holds(X), X=inst(auxC,Ci),holds(subTyp(Ci,E,R))} $
	\\
	
	$ \mathit{	
		sat1(auxC,Ci,E) \leftarrow holds'(X), X=inst(auxC,E),holds(subTyp(Ci,E,R))} $
	\\
	
	$ \mathit{		
		sat1(auxC,Ci,E) \leftarrow not ~ holds'(X), X=inst(auxC,Ci),holds(subTyp(Ci,E,R))} $
	\\
	
	$ \mathit{		
		samenumprop(Ci,R) \leftarrow
		holds(validrank(Ci,R)), }$ \\
	~ ~ ~ ~ ~ ~ ~ ~ $ \mathit{
		0 \#sum \{ -1,E : sat(auxC,Ci,E), holds(subTyp(Ci,E,R)); }$ \\
	~ ~ ~ ~ ~ ~ ~ ~ ~ ~ ~ ~ ~  ~ ~ $ \mathit{
		1,E : sat1(auxC,Ci,E), holds(subTyp(Ci,E,R)) \} 0 } $

\end{tabbing}
Let us call  $\mathit{Pref}$ the preference specification and the preference program defined above; checking whether $\tip(C) \sqsubseteq D$ is cw$^m$-entailed amounts to checking whether $inst(aux_C, D)$ is in all preferred answer sets of $\Pi(K,C,D)$ according to $\mathit{Pref}$.
 
\begin{proposition} \label{AS to pref-models}
Given a normalized ranked knowledge base $K=\langle  {\cal T}_{strict},$ $ {\cal T}_{C_1}, \ldots,$ $ {\cal T}_{C_k}, {\cal A}  \rangle$ over the set of concepts ${\cal C}$,  
and a subsumption  $\tip(C) \sqsubseteq D$, we can prove the following:
\begin{itemize}
\item[(1)]
if there is a canonical and $\tip$-compliant cw$^m$-model $\emme=(\Delta, <_{C_1}, \ldots, <_{C_k}, <, \cdot^I)$ of $K$  
that falsifies  $\tip(C) \sqsubseteq D$, then there is a preferred answer set $S$ 
of $\Pi(K,C,D)$ according to  $\mathit{Pref}$,
such that $inst(aux_C, D) \not \in S$.

\item[(2)]
if there is a preferred answer set $S$ of $\Pi(K,C,D)$  according to  $\mathit{Pref}$,  such that $inst(aux_C, D) \not \in S$,
then there is a canonical and $\tip$-compliant  cw$^m$-model $\emme=(\Delta, <_{C_1}, \ldots, <_{C_k}, <, \cdot^I)$ of $K$  
that falsifies  $\tip(C) \sqsubseteq D$.
\end{itemize}
\end{proposition}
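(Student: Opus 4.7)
The plan is to build on the correspondence of Proposition \ref{AS to models} between answer sets of $\Pi(K,C,D)$ and $\tip$-compliant cw$^m$-models of $K$, and to show that the \emph{asprin} preference program $\mathit{Pref}$ encodes exactly the global preference $<$ of Definition \ref{def-multipreference-int}(c), restricted to the comparison of the $aux_C$ constant across answer sets. The central lemma I would first prove is: given two answer sets $S, S'$ of $\Pi(K,C,D)$ and corresponding $\tip$-compliant cw$^m$-models $\emme, \emme'$ produced by Proposition \ref{AS to models}(1), the atom $better(p)$ holds comparing $S'$ to $S$ if and only if $aux_C^{\emme'} < aux_C^{\emme}$ under the global preference. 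This is a direct but careful unfolding of the preference program, whose clauses mirror, step by step, the conditions of Definition \ref{total_preorder} (the $\#$-strategy counting of satisfied typicality inclusions at each rank, via $sat$/$sat1$ and $moreprop$/$samenumprop$) and of Definition \ref{def-multipreference-int}(c) (via $better$/$betterwrt$/$bettereqwrt$ and $noattack$, with $morespec$ playing the role of $\succ$).

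For direction (1), assume a canonical $\tip$-compliant cw$^m$-model $\emme$ of $K$ falsifying $\tip(C) \sqsubseteq D$. Then there exists $x \in \min_<(C^I)$ with $x \notin D^I$. I would apply the constructive proof of Proposition \ref{AS to models}(2) to $\emme$, taking care to set the interpretation of $aux_C$ equal to $x$, obtaining an answer set $S$ with $inst(aux_C, D) \notin S$. To show $S$ is preferred, I argue by contradiction: if some $S'$ were strictly preferred to $S$ according to $\mathit{Pref}$, then by the central lemma and Proposition \ref{AS to models}(1) applied to $S'$, one extracts a $C$-element $y$ in a $\tip$-compliant cw$^m$-model with a typicality-satisfaction profile that, on the ranked TBoxes ${\cal T}_{C_j}$, would make $y < x$. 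By canonicity of $\emme$, a domain element with that profile is already present in $\Delta$, contradicting $x \in \min_<(C^I)$.

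For direction (2), assume $S$ is a preferred answer set of $\Pi(K,C,D)$ with $inst(aux_C, D) \notin S$. Apply Proposition \ref{AS to models}(1) to obtain a $\tip$-compliant cw$^m$-model $\emme$ falsifying $C \sqsubseteq D$; in particular $aux_C^I \in C^I$ and $aux_C^I \notin D^I$. To ensure canonicity, I would enlarge $\emme$ by adding, for each subset of $\lingconc_K$ consistent with $K$, a fresh witness domain element realizing that combination of concepts, and extend each $<_{C_j}$ to the enlarged domain as prescribed by Definition \ref{total_preorder}; this preserves satisfaction of ${\cal T}_{strict}$, of ${\cal A}$, and of $\tip$-compliance, while making the resulting model canonical. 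It remains to argue $aux_C^I \in \min_<(C^I)$: if there were $y \in C^I$ with $y < aux_C^I$ in the enlarged model, then by Proposition \ref{AS to models}(2) applied with $aux_C$ reinterpreted as $y$, we obtain an answer set $S'$ whose satisfaction profile for $aux_C$ strictly dominates that of $S$ in the sense of the central lemma, so $S'$ would be strictly preferred to $S$, contradicting preferredness of $S$.

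The main obstacle is the technical matching in the first paragraph: verifying that the \emph{asprin} encoding faithfully mirrors the semantic comparison of Definitions \ref{total_preorder} and \ref{def-multipreference-int}(c), in particular the interaction between the $\#$-strategy counting of satisfied typicality inclusions at each rank, the overriding provided by strictly more specific concepts via $morespec$, and the subtle asymmetry between $\leq_{C_j}$ and $<_{C_j}$ encoded through $bettereqwrt$ versus $betterwrt$. Once this lemma is in place, both directions follow by reusing the constructions of Proposition \ref{AS to models} combined with a single canonical enlargement step, so the proof reduces to an equivalence check between the ASP preference program and the semantic preference relation.
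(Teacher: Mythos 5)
Your proposal follows the same overall architecture as the paper's proof: both directions reduce to the two parts of Proposition~\ref{AS to models}, followed by a contradiction argument that trades $<$-minimality of the witness element against preferredness of the answer set, using canonicity to transfer the concept-profile of the element extracted from a competing answer set back into the model at hand. Part~(1) is essentially identical to the paper's. The one genuine divergence is in part~(2): the paper does not enlarge the model obtained from Proposition~\ref{AS to models}(1); instead it uses that model only to read off the profile of $z_C$, then invokes the known existence of a canonical (preferential) model of a consistent $K$ \cite{GiordanoFI2018} and locates, by canonicity, an element $y$ with the same profile inside that model. Your alternative --- adding a fresh witness for every consistent subset of $\lingconc_K$ and recomputing the preferences --- is workable in principle (since $\leq_{C_j}$ depends only on each element's own satisfaction profile, the relative order of pre-existing elements is unaffected), but the claim that the enlargement ``preserves satisfaction of ${\cal T}_{strict}$'' is exactly the non-trivial content of the canonical-model existence result: each fresh witness must also be given role successors satisfying the existential inclusions of ${\cal T}_{strict}$ without violating anything else, and you assert this rather than prove it. The paper's citation route avoids re-proving that. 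Finally, your ``central lemma'' matching the \emph{asprin} program clause-by-clause to Definitions~\ref{total_preorder} and~\ref{def-multipreference-int}(c) is a verification the paper leaves implicit, working instead with the semantic definition of preference between answer sets given just before Proposition~\ref{AS to pref-models}; making it explicit is a reasonable (arguably more careful) addition but not a different argument.
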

Propositions  \ref{AS to models} and \ref{AS to pref-models} tell us that, for computing cw$^m$-entailment,  it is sufficient to consider the polynomial $\tip$-compliant cw$^m$-models of $K$ corresponding to answer sets  $S$ of $\Pi(K,$ $C,D)$\footnote{Note that, for verifying cw$^m$-entailment of $\tip(C) \sqsubseteq D$,  all answer sets of $\Pi(K,$ $C,D)$ have to be considered and checking whether  $\Pi(K,C,D) \cup \{- inst(aux_C, D)\} $ has no preferred answer sets would not be correct.}. 
A $\Pi^p_2$ upper bound on the complexity of cw$^m$-entailment can be proved based on the the above formulation of cw$^m$-entailment as a problem of computing preferred answer sets. The $\Pi^p_2$-hardness can be proved by providing a reduction of the minimal entailment problem of {\em positive disjunctive logic programs}, which was proved to be a
 \textsc{$\Pi^P_2$}-hard problem by Eiter and Gottlob \shortcite{Eiter95}. 
\begin{proposition} \label{prop: upper bound}
Deciding cw$^m$-entailment is a  $\Pi^p_2$-complete problem.
\end{proposition}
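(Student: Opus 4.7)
The plan is to establish $\Pi^p_2$-membership and $\Pi^p_2$-hardness separately.

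For the upper bound, I would use Proposition~\ref{AS to pref-models}: $K \models_{cw^m} \tip(C) \sqsubseteq D$ iff every preferred answer set $S$ of $\Pi(K,C,D)$ contains $\mathit{inst}(aux_C,D)$. The complementary problem, namely the existence of a preferred answer set $S$ with $\mathit{inst}(aux_C,D) \notin S$, lies in $\Sigma^p_2$: guess $S$ (whose size is polynomial since $\Pi(K,C,D)$ is polynomial in $|K|$), check in polynomial time that $S$ is an answer set of the base program and that $\mathit{inst}(aux_C,D) \notin S$, then invoke a single coNP oracle call to verify that no answer set $S'$ of $\Pi(K,C,D)$ is strictly preferred to $S$. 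This is a coNP check because the preference test $aux_C^{S'} < aux_C^S$ encoded by $\mathit{Pref}$ is polynomial given two candidate answer sets, so the existence of a preferred alternative is an NP property. Hence the complementary problem is in $\mathrm{NP}^{\mathrm{coNP}} = \Sigma^p_2$, and cw$^m$-entailment is in $\Pi^p_2$.

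For the hardness lower bound, I would reduce from minimal entailment in positive disjunctive logic programs, proved $\Pi^p_2$-hard by Eiter and Gottlob. Given such a program $P$ over atoms $a_1,\ldots,a_n$ with query atom $a_q$, build a ranked $\elpb$ KB over concept names $A_1,\ldots,A_n$ together with auxiliary ``complement'' names $\overline{A_1},\ldots,\overline{A_n}$ constrained by $A_i \sqcap \overline{A_i} \sqsubseteq \bot$. Each rule $a_{i_1} \vee \cdots \vee a_{i_s} \leftarrow b_{j_1},\ldots,b_{j_t}$ becomes the $\elpb$ strict inclusion $A_{j_1} \sqcap \cdots \sqcap A_{j_t} \sqcap \overline{A_{i_1}} \sqcap \cdots \sqcap \overline{A_{i_s}} \sqsubseteq \bot$, which uses only $\sqcap$ and $\bot$. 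For each $i$ introduce a distinguished concept $C_i$ with $\top \sqsubseteq C_i$ (so $C_i \equiv \top$ and no $C_i$ is strictly more specific than another under $\succ$) and ranked TBox $\{(\tip(C_i) \sqsubseteq \overline{A_i},\,0)\}$. The target is $\tip(C_1) \sqsubseteq A_q$. Under this encoding, for any $x, y \in \Delta$ the relation $x <_{C_i} y$ reduces to ``$x \in \overline{A_i}^I$ and $y \notin \overline{A_i}^I$'', and since no $\succ$-dominance holds among the $C_i$'s, the global $<$ of Definition~\ref{def-multipreference-int}(c) coincides with the componentwise Pareto order on the bit-vector of $A_i$-memberships, i.e.\ set-inclusion minimality of $\{A_i : x \in A_i^I\}$. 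The strict rule inclusions guarantee that the $A_i$-extensions of $<$-minimal $\tip$-compliant canonical elements are precisely models of $P$, and $<$-minimality matches set-inclusion minimality; hence $K \models_{cw^m} \tip(C_1) \sqsubseteq A_q$ iff $P \models_{min} a_q$.

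The main obstacle will be rigorously establishing correctness of the hardness reduction. The delicate points are: (i) $\elpb$ admits no disjunction and no explicit negation on the right-hand side of inclusions, so the disjunctive content of $P$ must be reconstructed through the complement concepts $\overline{A_i}$ together with $\sqsubseteq \bot$, and one must check that canonical and $\tip$-compliant cw$^m$-models of $K$ realise every truth assignment needed by $P$; (ii) Definition~\ref{def-multipreference-int}(c) is a \emph{modified} Pareto condition that can be overridden by $\succ$, so the $C_i$'s must be deliberately arranged to be $\succ$-incomparable to recover plain Pareto minimality, and hence set-inclusion minimality of models of $P$; and (iii) one must show that extra domain elements guaranteed by canonicity do not disturb the correspondence between preferred answer sets and minimal models -- this is exactly where Proposition~\ref{AS to pref-models}'s focus on the single auxiliary element $aux_C$ (rather than on all named individuals) becomes essential.
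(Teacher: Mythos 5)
Your $\Pi^p_2$ membership argument is exactly the paper's (guess a polynomial-size answer set $S$ of $\Pi(K,C,D)$ with $\mathit{inst}(aux_C,D)\notin S$, then a single NP oracle call to certify that no answer set is strictly preferred, giving $\Sigma^p_2$ for the complement), so that half is fine. The gap is in the hardness reduction. The source problem and the general shape (complement concepts, clauses as $\sqsubseteq\bot$ axioms, preferences rewarding falsity) match the paper, but your concrete encoding is incorrect because it ignores that $\elpb$ interpretations are three-valued on each pair $(A_i,\overline{A_i})$: an element may be an instance of neither. With only the single inclusion $\tip(C_i)\sqsubseteq\overline{A_i}$, the preference $<_{C_i}$ rewards membership in $\overline{A_i}$ and is indifferent between being in $A_i$ and being undefined on $a_i$. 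Since your strict axioms place the $A_j$'s only on the left-hand side of $\sqsubseteq\bot$ constraints, deleting all positive memberships from an element's type preserves consistency and leaves it $\sim_{C_j}$-equivalent to the original for every $j$; by canonicity such an element exists, and it is $<$-minimal whenever the original is. Hence, whenever a canonical $\tip$-compliant model exists at all, there is a $<$-minimal element of $C_1^I=\Delta$ belonging to no $A_i^I$, so $K\not\models_{cw^m}\tip(C_1)\sqsubseteq A_q$ for every $q$ --- even when $P\models_{min} a_q$ (e.g.\ $P=\{a_1\vee a_2,\; a_3\leftarrow a_1,\; a_3\leftarrow a_2\}$ with query $a_3$). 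In the remaining degenerate cases the reduction also fails: a fact $a_1\leftarrow$ yields $\overline{A_1}\sqsubseteq\bot$, so no element can satisfy $\tip(C_1)\sqsubseteq\overline{A_1}$, no $\tip$-compliant model exists, and every inclusion is vacuously entailed.

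The paper repairs exactly this point: it attaches \emph{two} typicality inclusions per atom, $\tip(M_h)\sqsubseteq P_h$ and $\tip(M_h)\sqsubseteq\overline{P_h}$, at \emph{different} ranks. Then whichever of $P_h,\overline{P_h}$ is consistently addable to an ``undefined'' element yields a strictly $<_{M_h}$-preferred element in the canonical model, so $<$-minimal elements are forced to be total on every atom, while the rank difference still makes one truth value preferred to the other among total elements, recovering subset-inclusion minimality. It also relativizes the construction to an auxiliary concept $H$ (with clause concepts $D_j$, $D_S$ and $H\sqsubseteq D_S$) so that clause satisfaction is imposed on exactly the elements being compared. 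Your own caveat (i) points at the right danger, but the cure is a different encoding --- a second, differently ranked inclusion per atom --- not an extra verification step on the encoding you propose.
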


\subsection{Some experimental results}

For Example \ref{exa:student}, we actually get that typical employed students have a boss, but not that they are young: there are, in fact, two preferred answer sets, with  $\mathit{inst(aux_C,Young)}$ and $\mathit{inst(aux_C,NotYoung)}$ respectively; they are generated
in 0.40 seconds.

A first scalability test is based on a slightly larger version of the same example, with 5 distinguished classes and 50 typicality inclusions. Adding to it more typicality inclusions, up to 8 times (400 inclusions), the runtime grows up to 0.99 s (see Table \ref{runningtimes}, test 1a, average
running times for \emph{asprin 1.1.1} under Linux on an Intel Xeon E5-2640 @ 2.00GHz). Adding up to 8 copies of the KB (i.e., adding $\mathit{\tip(Employee') \sqsubseteq NotYoung'}$ and similar), with up to 40 distinguished classes and 400 typicality inclusions, the runtime grows up to 3.93 (Table \ref{runningtimes}, test 1b).

In another experiment, we have distinguished classes $C_1 \ldots C_5$ with 
$C_3 \sqsubseteq C_2 \sqsubseteq C_1$, $C_5 \sqsubseteq C_4 \sqsubseteq C_1$. For all $i$,
the $C_i$'s are typically $P_i$'s, $Q_i$'s, $R_i$'s, 
where for $i \neq j$, $P_i \sqcap P_j  \sqsubseteq \bot$.
A typical $C_3 \sqcap C_5$ then inherits all the $Q_i$'s and $R_i$'s properties, while it can
either be a $P_3$ or a $P_5$.
Also in this case adding up to 8 copies of the KB (with then up to 40 distinguished classes
and 120 typicality inclusions) leads to a moderate increase of the running time which ranges from 1.03 to 1.76 seconds (Table \ref{runningtimes}, test 2).
   
Dealing with longer chains of subclasses seems more challenging. For a modification of the base case of the previous example with 10 distinguished classes 
$C_{10} \sqsubseteq C_8 \sqsubseteq C_6 \sqsubseteq C_4 \sqsubseteq C_2 \sqsubseteq C_1$, 
$C_9 \sqsubseteq C_7 \sqsubseteq C_5 \sqsubseteq C_3 \sqsubseteq C_1$, 
and 50 typicality axioms, checking the properties of typical $C_9 \sqcap C_{10}$ already takes 5.4 seconds.

\begin{table}[t]
	\begin{tabular}{ c || c  c  c  c }
		\hline
		\hline
		\ \  & 1x &    2x     &  4x      &  8x     \\
		\hline
		\hline
		test 1a ~ ~ 
		& 0.35
		& 0.45
		& 0.63
		& 0.99
		\\
		\hline
		\hline
		test 1b ~ ~
		& 0.35
		& 0.50
		& 0.95
		& 3.93
		\\
		\hline
		\hline
		test 2 ~ ~ ~
		& 1.03
		& 1.15
		& 1.27
		& 1.76
		\\
		\hline
		\hline
\end{tabular}
	\caption{Some scalability results}
	\label{runningtimes}
\end{table}

   \section{Conclusions and related work}  \label{sec:conclu} 

In this paper we have developed an ASP approach for defeasible inference in a concept-wise multipreference extension of $\el$.  
Our semantics is related to
 the multipreference semantics for $\alc$ developed by Gliozzi \shortcite{GliozziAIIA2016}, which is based on the idea of refining the rational closure construction considering the preference relations $<_{A_i}$ associated with different aspects, but we follow a different route concerning both the definition of the preference relations associated with concepts, and  the way of combining them in a single preference relation. In particular, Gliozzi's multipreference semantics aims at defining a refinement of rational closure semantics, which is not our aim here;
 compared with rational closure, our semantics is neither weaker (as it does not suffer from the ``the blocking of property inheritance" problem) nor stronger (see Section 4).

The idea of having different preference relations, associated with different typicality operators, has been studied by Gil \shortcite{fernandez-gil} to define a multipreference formulation of the description logic $\alctmin$,  
a typicality DL with a minimal model preferential semantics. 
In this proposal we associate preferences with concepts,  
and we  combine such preferences into a single global one. 
For a preferential extension of $\el$ based on the same minimal model semantics as $\alctmin$, it has been proved \cite{ijcai2011} that minimal entailment is already \textsc{ExpTime}-hard for $\el$ KBs,  
while a $\Pi^p_2$ upper bound holds for minimal entailment in the Left Local fragment of $\eltm$,
as for circumscriptive KBs  \cite{Bonatti2011}.
A related problem of commonsense concept combination has also been addressed in a probabilistic extension of $\alctr$ 
 \cite{Lieto2018}.

Among the formalisms combining DLs with logic programming rules \cite{Eiter2008,Eiter2011,rosatiacm,KnorrECAI12,Gottlob14} 
DL-programs \cite{Eiter2008,Eiter2011} support a loose coupling of DL ontologies and rule-based reasoning
under the answer set semantics and the well-founded semantics; rules may contain DL-atoms in their bodies,
corresponding to queries to a DL ontology, which can be modified according to a list of updates.
The non-monotonic description logic ${\cal DL}^N$ \cite{bonattiAIJ15} supports normality concepts  based on a notion of overriding,
enjoying good computational properties, and preserves the tractability for low complexity DLs, including ${\el}^{++}$ and $DL$-$lite$ \cite{BonattiSWC15}.
Bozzato et al. \shortcite{Bozzato14,Bozzato2018} present extensions of the CKR (Contextualized Knowledge Repositories) framework in which defeasible axioms are allowed in the global context 
and exceptions can be handled by overriding and have to be justified in terms of semantic consequence.
A translation of extended CKRs (with  knowledge bases in ${\cal SROIQ}$-RL) into Datalog programs under the answer set semantics is developed. 
Related approaches are also the work by  Beierle et al. \shortcite{BeierleAMAI2018}, characterizing skeptical c-inference as a constraint satisfaction problem,
and the work by Deane et al. \shortcite{Russo2015} presenting an inconsistency tolerant semantics for ${\cal ALC}$ using preference weights and 
exploiting ASP optimization for computing preferred interpretations.
Reasoning under the rational closure for  low complexity DLs  
has been investigated for $\sroel$ \cite{SROEL_FI_2018}, using a Datalog plus stratified negation polynomial construction  and for ${\cal ELO}_{\bot}$ \cite{CasiniStracciaM19}, developing a polynomial time subsumption algorithm for the nominal safe fragment \cite{IncredibleELK_JAR14}.
A problem that we have not considered in this paper is the treatment of defeasible information for existential concepts;
it has been addressed by Pensel and Turhan \shortcite{Pensel18}, who developed a stronger version of rational and relevant entailment in $\el$, 
exploiting  a materialisation-based algorithm for $\el$ and a canonical model construction.

It is known that Brewka's $\#$ strategy \shortcite{Brewka04} 
exploits the lexicographical order also used by Lehmann to define the models of the lexicographic closure of a conditional knowledge base \cite{Lehmann95}, starting from the rational closure ranking. 
This suggests that, while we have used this strategy for ranked TBox ${\cal T}_{C_j}$ containing only  typicality inclusions of the form $\tip(C_j) \sqsubseteq D$,
coarsely grained ranked TBoxes could be allowed, in which ${\cal T}_{C_j}$ contains all typicality inclusions 
$\tip(E) \sqsubseteq D$ for any subclass $E$ of $C_j$. We expect that this might improve performances, by reducing the number of $\leq_{C_j}$ relations to be considered.
We leave for future work investigating whether our ASP approach with preferences can be used for computing the lexicographic closure for $\elpb$,
and whether alternative notions of specificity can be adopted.

The modular separation of the typicality inclusions in different TBoxes and their separate use for defining preferences $\leq_{C_i}$ 
 suggests that some of the optimizations used by ELK reasoning algorithms \cite{IncredibleELK_JAR14} might be 
 extended to our setting.

{\bf Acknowledgement:} We thank the anonymous referees for their helpful comments and suggestions. This research is partially supported by INDAM-GNCS Project 2019. 


\begin{appendix}

\section{Proofs for Sections 3 and 4}

\begin{proposition} \label{prop:existence-Tcompliant}
Let $K$ be a ranked knowledge base over ${\cal C}$. If $K$ has a preferential model, then there is an $\elpb$ interpretation $\tip$-compliant for $K$.
\end{proposition}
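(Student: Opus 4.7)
The idea is to start from a preferential model of $K$ and observe that, by the very definition of the preferential interpretation of $\tip$, its minimal elements for each distinguished concept already witness $\tip$-compliance. No substantial surgery on the model should be required; we only need to discard the preference relation and keep the underlying $\elpb$ interpretation.

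More concretely, suppose $\emme = \langle \Delta, <, \cdot^I \rangle$ is a preferential model of $K$, so that $\emme$ satisfies every strict inclusion in ${\cal T}_{strict}$, every assertion in ${\cal A}$, and, for each typicality inclusion $\tip(C_h) \sqsubseteq D$ appearing in some ${\cal T}_{C_h}$, the condition $\min_<(C_h^I) \subseteq D^I$ holds. By well-foundedness (or smoothness) of the preferential semantics, for every $C_h \in {\cal C}$ with $C_h^I \neq \emptyset$ the set $\min_<(C_h^I)$ is non-empty. Pick any $x_h \in \min_<(C_h^I)$: then for every $(\tip(C_h) \sqsubseteq D, r) \in {\cal T}_{C_h}$ we have $x_h \in C_h^I \cap D^I$, so $x_h$ satisfies the defeasible inclusion in the sense of Section~3 (it is in $C_h^I$ and in $D^I$). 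Thus $x_h$ satisfies \emph{all} defeasible inclusions in ${\cal T}_{C_h}$, as required for $\tip$-compliance.

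I would then conclude by taking the reduct $I = \langle \Delta, \cdot^I \rangle$ of $\emme$. Since $\emme$ satisfies ${\cal T}_{strict}$ and the relevant witnesses $x_h$ live in $\Delta$ with the same interpretation, $I$ is an $\elpb$ interpretation that satisfies ${\cal T}_{strict}$ and, for every $C_h \in {\cal C}$ with $C_h^I \neq \emptyset$, admits a witness satisfying ${\cal T}_{C_h}$. Hence $I$ is $\tip$-compliant for $K$.

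The only delicate point I foresee is making sure that the ``preferential model'' to which the hypothesis refers is one whose preference relation on $\Delta$ is smooth/well-founded on each $C_h^I$, so that $\min_<(C_h^I)$ is guaranteed non-empty whenever $C_h^I$ is non-empty; this is standard in the preferential (and ranked) semantics cited in the paper, so I would just invoke it rather than reprove it. If needed, the paper's appeal to the finite model property of $\elpb$ (plus an embedding into $\shiq$ with typicality, as cited) lets one assume a finite $\Delta$, trivially ensuring the existence of minimal elements.
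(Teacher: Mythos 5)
Your proposal is correct and follows essentially the same route as the paper's own proof: take the reduct of a preferential model, use well-foundedness of $<$ to extract a non-empty $\min_<(C_h^I)$ whenever $C_h^I \neq \emptyset$, and note that any such minimal element satisfies all defeasible inclusions in ${\cal T}_{C_h}$ because the preferential model satisfies $\min_<(C_h^I) \subseteq D^I$ for each of them. The ``delicate point'' you flag is handled in the paper exactly as you suggest, by building well-foundedness into the definition of preferential model.
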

\begin{proof}[{ Proof(sketch)}]
Let 
$K=\langle  {\cal T}_{strict},$ $ {\cal T}_{C_1}, \ldots,$ $ {\cal T}_{C_k}, {\cal A}  \rangle$ be a ranked $\elpb$ knowledge base over  ${\cal C}$.
Following Giordano et al.\ \shortcite{lpar2007} a preferential model for an $\elpb$ knowledge base $K$ can be defined as a triple  ${\enne}= \langle \Delta, <, \cdot^I \rangle$ such that:
$ \langle \Delta,\cdot^I \rangle$ is an $\elpb$ interpretation satisfying all inclusions in  ${\cal T}_{strict}$ and all assertions in ${\cal A}$;
$<$ is is an irreflexive, transitive, well-founded binary relation on $\Delta$; and, for all typicality inclusions $ \tip(C_j) \sqsubseteq D$ in $K$, $min_<(C_j^I) \subseteq D^I$ holds.
If $K$ has a preferential model $ \langle \Delta, <, \cdot^I \rangle$, it has also an $\elpb$ model  which is compliant with $K$. In fact, 
$I= \langle \Delta,\cdot^I \rangle$ is an $\elpb$ model of $K$ and, if $C_j^I\neq \emptyset$, by well-foundedness, $min_<(C_j^I)\neq \emptyset$ 
Hence, there is some $x \in min_<(C_j^I)$. Clearly, $x \in C_j^I$. 
As all typicality inclusions are satisfied in the preferential model $\enne$, 
for all $ \tip(C_j) \sqsubseteq D$ in ${\cal T}_{C_j}$, $min_<(C_j^I) \subseteq D^I$ holds and, hence,
$x \in D^I$. 
Thus, when $C_j^I\neq \emptyset$,  $x \in C_j^I$ and $x$ satisfies all defeasible inclusions in  ${\cal T}_{C_j}$. As this holds for all $C_j \in {\cal C}$,
the $\elpb$ interpretation $ \langle \Delta, \cdot^I \rangle$ is $\tip$-compliant for $K$.
\end{proof}

\begin{proposition} \label{prop:KLM_properties}
cw$^m$-entailment  
satisfies all KLM postulates of  preferential consequence relations.

\end{proposition}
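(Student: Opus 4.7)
The plan is to reduce the claim to a direct verification in each canonical $\tip$-compliant cw$^m$-model, leveraging Proposition \ref{properties_global_pref}, which guarantees that in every such model the global preference $<$ is an irreflexive, transitive, well-founded relation on $\Delta$. Once this is in hand, the interpretation $(\tip(C))^I = min_<(C^I)$ reduces the verification of the KLM postulates to the standard argument for preferential consequence relations in the sense of Kraus, Lehmann and Magidor \cite{KrausLehmannMagidor:90}. Since cw$^m$-entailment is defined as satisfaction in \emph{every} canonical $\tip$-compliant cw$^m$-model of $K$, it suffices to prove each postulate model by model and then quantify over such models.

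Concretely, I would first dispatch the easy postulates. For (REFL) $\tip(C) \sqsubseteq C$, the inclusion $min_<(C^I) \subseteq C^I$ is immediate. For (LLE), if $C \equiv D$ holds in $K$ then $C^I = D^I$ in every $\elpb$-interpretation, hence $(\tip(C))^I = (\tip(D))^I$; (RW) similarly follows from the fact that any cw$^m$-model is an $\elpb$-model and hence satisfies the strict entailments of $K$; (AND) is just the intersection property $min_<(C^I) \subseteq D^I$ and $min_<(C^I) \subseteq E^I$ implying $min_<(C^I) \subseteq (D \sqcap E)^I$. The one postulate that actually uses the order-theoretic structure of $<$ is cautious monotonicity (CM): assuming in some model that $min_<(C^I) \subseteq D^I$ and $min_<(C^I) \subseteq E^I$, I would show $min_<((C \sqcap D)^I) \subseteq E^I$. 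Take $x \in min_<((C \sqcap D)^I)$; if $x \notin min_<(C^I)$, well-foundedness of $<$ yields a minimal $y <x$ with $y \in C^I$, and such a $y$ is in $min_<(C^I) \subseteq D^I$, hence in $(C \sqcap D)^I$, contradicting minimality of $x$. Thus $x \in min_<(C^I) \subseteq E^I$.

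For the cumulativity-style postulates and any disjunctive postulate (e.g., OR), I would restrict attention to the formulations meaningful in $\elpb$ (since $\elpb$ lacks disjunction in the standard sense), or state the OR property at the meta-level using concept names that are defined to be disjunctions by auxiliary strict inclusions in ${\cal T}_{strict}$; in this guise OR reduces once more to a straightforward minimal-element computation, as $min_<((C \sqcup D)^I) \subseteq min_<(C^I) \cup min_<(D^I)$ in any well-founded $<$.

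The main obstacle I anticipate is \emph{not} the verification of the postulates themselves, which are standard once the global $<$ is known to be a well-founded strict partial order, but rather the bookkeeping needed to ensure the argument lifts across all canonical $\tip$-compliant cw$^m$-models of $K$ uniformly. In particular, for CM one must check that $\tip$-compliance is preserved when considering the concept $C \sqcap D$: canonicity of the underlying $\elpb$-interpretation guarantees that every concept consistent with $K$ is populated, which together with well-foundedness provides the smoothness needed for the CM argument. All remaining verifications are routine.
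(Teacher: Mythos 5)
Your proposal is correct and follows essentially the same route as the paper: verify each KLM postulate semantically in each cw$^m$-model via $(\tip(C))^I=min_<(C^I)$, then pass to cw$^m$-entailment as the intersection over all canonical $\tip$-compliant cw$^m$-models (the paper makes this last step explicit by citing Kraus, Lehmann and Magidor's result that an intersection of preferential consequence relations is again preferential). Your well-foundedness argument for (CM) is a welcome elaboration of the step the paper states tersely as $min_<((A\sqcap D)^I)=min_<(A^I)$, and your caveat about $\sqcup$ in (OR) is handled in the paper by simply reading $(A\sqcup B)^I$ as $A^I\cup B^I$ at the semantic level.
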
 
\begin{proof}

We prove that any cw$^m$-model $\emme=\langle \Delta, <_{C_1}, \ldots, <_{C_k}, <, \cdot^I \rangle $ of $K$
satisfies the properties of a  preferential consequence relation. We consider the following obvious reformulation of the properties, considering that $\tip(C) \sqsubseteq D$ stands for the conditional $C {\ent} D$ in KLM preferential logics \cite{KrausLehmannMagidor:90,whatdoes}:

{\em

(REFL) \ $\tip(C) \sqsubseteq C $ 

(LLE) \ If $A \equiv B$ and $\tip(A) \sqsubseteq C $, then $\tip(B) \sqsubseteq C $ 

(RW) \  If $C \sqsubseteq D$ and $\tip(A) \sqsubseteq C $, then $\tip(A) \sqsubseteq D $ 

(AND) \ If $\tip(A) \sqsubseteq C $ and $\tip(A) \sqsubseteq D $, then $\tip(A) \sqsubseteq C \sqcap D $

(OR) \ If $\tip(A) \sqsubseteq C $ and $\tip(B) \sqsubseteq C $, then $\tip(A \sqcup B) \sqsubseteq C $

(CM) \  If $\tip(A) \sqsubseteq D$ and $\tip(A) \sqsubseteq C $, then $\tip(A \sqcap D) \sqsubseteq C $ }

\noindent
We exploit the fact that   the structure $ \langle \Delta, \cdot^I \rangle$ is an $\elpb$ interpretation:
\begin{itemize}
\item
(REFL) \ $\tip(C) \sqsubseteq C $ 

We have to prove that $\emme$ satisfies (REFL) \ $\tip(C) \sqsubseteq C $ , i.e., $min_<(C^I) \subseteq C^I$, which holds by definition of $min_<$.

\item
(LLE) \ If $A \equiv B$ and $\tip(A) \sqsubseteq C $, then $\tip(B) \sqsubseteq C $. 

Here, $A \equiv B$ means that $A$ and $B$ are equivalent in the underlying logic $\elpb$, i.e., for all  $\elpb$ interpretations $J= \langle \Delta^J, \cdot^J \rangle$, $A^J= B^J$. 
If $\tip(A) \sqsubseteq C $ is satisfied in $\emme$, then $min_<(A^I) \subseteq C^I$. As $ \langle \Delta, \cdot^I \rangle$ is a $\elpb$ interpretation,
$A^I= B^I$, and $min_<(B^I) =min_<(A^I) \subseteq C^I$. Thus, $\tip(B) \sqsubseteq C $ is satisfied in $\emme$.

\item
(RW) \  If $C \sqsubseteq D$ and $\tip(A) \sqsubseteq C $, then $\tip(A) \sqsubseteq D $. 

Let us assume that $\tip(A) \sqsubseteq C $ is satisfied in $\emme$ and that  $C \sqsubseteq D$ is satisfied in all $\elpb$ interpretations.
Then: $min_<(A^I) \subseteq C^I$ and $C^I \subseteq D^I$.  Hence, $min_<(A^I) \subseteq D^I$. Therefore, $\tip(A) \sqsubseteq D $ is satisfied in $\emme$.

\item
(AND) \ If $\tip(A) \sqsubseteq C $ and $\tip(A) \sqsubseteq D $, then $\tip(A) \sqsubseteq C \sqcap D $.

Let us assume that $\tip(A) \sqsubseteq C $ and $\tip(A) \sqsubseteq D $ are satisfied in $\emme$, that is $min_<(A^I) \subseteq C^I$ and $min_<(A^I) \subseteq D^I$. For all $x \in  min_<(A^I)$,  $x \in C^I$ and $x \in D^I$ and, hence, $x \in (C \sqcap D)^I$. Therefore, $\tip(A) \sqsubseteq C \sqcap D $ is satisfied in $\emme$.

\item
(OR) \ If $\tip(A) \sqsubseteq C $ and $\tip(B) \sqsubseteq C $, then $\tip(A \sqcup B) \sqsubseteq C $

If $\tip(A) \sqsubseteq C $ and $\tip(B) \sqsubseteq C $ are satisfied in $\emme$, then $min_<(A^I) \subseteq C^I$ and $min_<(B^I) \subseteq C^I$.  Hence, 
if $x \in min_<((A \sqcup B)^I)$,  $x \in min_<(A^I)$ or $x \in min_<((B^I)$. In both cases $x \in C^I$ holds. Hence, $\tip(A \sqcup B) \sqsubseteq C $ is satisfied in $\emme$.

\item
(CM) \  If $\tip(A) \sqsubseteq D$ and $\tip(A) \sqsubseteq C $, then $\tip(A \sqcap D) \sqsubseteq C $ 

If $\tip(A) \sqsubseteq D$ and $\tip(A) \sqsubseteq C $ then $min_<(A^I) \subseteq D^I$ and $min_<(A^I) \subseteq C^I$. From $min_<(A^I) \subseteq D^I$, we can conclude that  $min_<((A \sqcap D)^I) = min_<(A^I)$. Hence, $min_<((A \sqcap D)^I) \subseteq C^I$, and $\tip(A \sqcap D) \sqsubseteq C $ is satisfied in $\emme$.

\end{itemize}

As the set of typicality inclusions satisfied by $\emme$, $Typ^\emme$, satisfies all KLM postulates above, $Typ^\emme$ is a a preferential consequence relation. Then each cw$^m$-model $\emme$ of $K$  determines a preferential consequence relation $Typ^\emme$.  
The typicality inclusions $\tip(C) \sqsubseteq D$ cw$^m$-entailed by $K$ are, by definition,  satisfied in all canonical $\tip$-compliant cw$^m$-model $\emme$ of $K$. They belong to $Typ^\emme$, for all canonical $\tip$-compliant cw$^m$-model $\emme$ of $K$. Hence, the set of all typicality inclusions cw$^m$-entailed by $K$ is the the intersection of all $Typ^\emme$, for $\emme$ a canonical $\tip$-compliant cw$^m$-model of $K$.

Kraus, Lehmann and Magidor \shortcite{KrausLehmannMagidor:90,whatdoes} 
have proved that the intersection of a set of preferential consequence relations is as well a preferential consequence relation. It follows that the set of the typicality inclusions cw$^m$-entailed by $K$ is a preferential consequence relation and, therefore, it satisfies all KLM postulates of a preferential consequence relation.
\end{proof}

\section{Materialization Calculus }
\label{Appendix_Calculus}

We report the fragment of the materialization calculus \cite{KrotzschJelia2010} which is used in Section 5. 
The representation of a knowledge base ({\em input translation}) is as follows, where, to keep a DL-like notation, we do not follow the ASP convention where variable names start with uppercase;
in particular, $A$, $B$ $C$, and $R$, $S$, $T$, are intended as ASP constants corresponding to the same class/role names in $K$:

\vspace{-0.3cm}
\begin{tabbing}
$ \mathit{ \exists R . Self \sqsubseteq C }$ \= \kill  \\
\> $\mathit{ a \in N_I }$ \'             $\mapsto \mathit{ nom(a) } $ \\
\> $\mathit{ C \in N_C }$ \'             $\mapsto \mathit{ cls(C) } $ \\
\> $\mathit{ R \in N_R }$ \'             $\mapsto \mathit{ rol(R) } $ \\
\> $\mathit{C(a)}$ \'                    $\mapsto \mathit{ subClass(a,C) } $ \\
\> $ \mathit{ R(a,b)}$ \'                $\mapsto \mathit{supEx(a,R,b,b) } $ \\
\> $ \mathit{ \top \sqsubseteq C}$ \'    $\mapsto \mathit{top(C)} $ \\
\> $ \mathit{ A \sqsubseteq \bot}$ \'    $\mapsto \mathit{bot(A) } $ \\
\> $ \mathit{ \{a\} \sqsubseteq C  }$ \' $\mapsto \mathit{subClass(a,C) } $ \\
\> $ \mathit{ A \sqsubseteq C  }$ \'     $\mapsto \mathit{subClass(A,C) } $ \\
\> $ \mathit{ A \sqcap B \sqsubseteq C }$ \'            $\mapsto \mathit{subConj(A,B,C) } $ \\
\> $ \mathit{ \exists R . A \sqsubseteq C  }$ \'        $\mapsto \mathit{subEx(R,A,C)  } $ \\
\> $ \mathit{ A \sqsubseteq \exists R . B   }$ \'    $\mapsto \mathit{supEx(A,R,B,aux_i)  }$ \\
\> $ \mathit{ R \sqsubseteq T  }$ \'    $\mapsto \mathit{ subRole(R,T)  } $ \\
\> $ \mathit{ R \circ S  \sqsubseteq T  }$ \'    $\mapsto \mathit{ subRChain(R,S,T)  } $ \\
\end{tabbing}
\vspace{-0.3cm}
In the translation of $ \mathit{ A \sqsubseteq \exists R . B   }$,
$\mathit{aux_i}$  is a new constant, and a different constant is needed for each axiom of this form.

The {\em inference rules} included in $\Pi_{K}$ in Section 5 are the following\footnote{Here, $u, v, x, y, z, w$, possibly with suffixes, are ASP variables.}:
\vspace{-0.3cm}
\begin{tabbing}
$(10)$ \= \kill \\
$(1) ~ \mathit{inst(x, x) \leftarrow nom(x) }  $\\
$(3) ~ \mathit{inst(x, z) \leftarrow top(z), inst(x, z') } $\\
$(4) ~ \mathit{inst(x, y) \leftarrow bot(z), inst(u, z), inst(x, z'), cls(y) } $\\
$(5) ~ \mathit{inst(x, z) \leftarrow subClass(y, z), inst(x, y) }  $\\
$(6) ~ \mathit{inst(x, z) \leftarrow subConj(y1, y2, z), inst(x, y1), inst(x, y2) } $\\
$(7) ~ \mathit{inst(x, z) \leftarrow subEx(v, y, z), triple(x, v, x'), inst(x', y) } $\\
$(9) ~ \mathit{triple(x, v, x') \leftarrow supEx(y, v, z, x'), inst(x, y) } $\\
$(10) ~ \mathit{inst(x', z) \leftarrow supEx(y, v, z, x'), inst(x, y) } $\\
$(13) ~ \mathit{triple(x, w, x') \leftarrow subRole(v, w), triple(x, v, x') } $ \\
$(15) ~ \mathit{triple(x, w, x'') \leftarrow subRChain(u, v,w), triple(x, u, x'), triple(x', v, x'') } $ \\
$(27) ~ \mathit{inst(y, z) \leftarrow  inst(x, y), nom(y), inst(x, z) } $ \\
$(28) ~ \mathit{inst(x, z) \leftarrow  inst(x, y), nom(y), inst(y, z) } $ \\
$(29) ~ \mathit{triple(z, u, y) \leftarrow inst(x, y), nom(y), triple(z, u, x) } $
\end{tabbing}

We include the additional rule: \\
$(4b) ~ \mathit{\bot \leftarrow bot(z), inst(u, z) } $

\section{Proofs for Section 5} \label{appendix:Section4}

\subsection{Proof of Proposition \ref{AS to models}} \label{appendix:Prop2}

{\em Proposition \ref{AS to models}}
{
Given a normalized ranked knowledge base $K=\langle  {\cal T}_{strict},$ $ {\cal T}_{C_1}, \ldots,$ $ {\cal T}_{C_k}, {\cal A}  \rangle$ over the set of concepts ${\cal C}$,  
and a (normalized) subsumption  $C \sqsubseteq D$, we can prove the following: 
\begin{itemize}
\item[(1)]
if there is an answer set $S$ of the ASP program $\Pi(K,C,D)$, 
such that $inst(aux_C, D) \not \in S$,
then there is a $\tip$-compliant cw$^m$-model $\emme=\langle \Delta, <_{C_1}, \ldots, <_{C_k}, <, \cdot^I \rangle $ for $K$  
that falsifies the subsumption  $C \sqsubseteq D$.

\item[(2)]
if there is a $\tip$-compliant cw$^m$-model $\emme=\langle \Delta, <_{C_1}, \ldots, <_{C_k}, <, \cdot^I \rangle $ of $K$  
that falsifies the subsumption  $C \sqsubseteq D$,
then there is an answer set $S$ 
of $\Pi(K,C,D)$,  
such that $\mathit{ inst(aux_C, D) \not \in S}$.
\end{itemize}
}

\medskip
\noindent
For part (1), in the following, given the answer set $S$ of the program $\Pi(K,C,D)$ 
such that $inst(aux_c, C) \not \in S$,  we construct a cw$^m$-model $\emme$ falsifying $C \sqsubseteq D$.
In particular, we construct the domain of $\emme$ from the set $Const$ including all the named constants $c \in N_I$
as well as all the auxiliary constants occurring in the ASP program $\Pi(K,C,D)$ 
defining an equivalence relation over constants and using equivalence classes
to define domain elements. The construction is similar to the proof of completeness by Kr\"{o}tzsch  \shortcite{jeliaReport}. For readability, we write $\auxARC$ and $aux_{C_i}$, respectively, for the constants
associated with inclusions $A \sqsubseteq \exists R.C$ and with the typicality concepts $\tip(C_i)$, for a concept  $C_i \in {\cal C}$.
Observe that the answer set $S$ contains {\em no information} about the definition of the preference relations $<_{C_j}$'s and $<$ on domain elements that can be used to build the model $\emme$.
In fact, these relations are not encoded in the ASP program. They are only used in \emph{asprin} for defining the preference among models.

First, let us define a relation $\approx$ between the constants in $Const$:

\medskip

\noindent
{\em Definition 7 }

\noindent
Let $\approx$ be the reflexive, symmetric and transitive closure of the relation
$\{ (c,d) \mid inst(c,d) \in S$, for $c \in Const$ and  $d \in N_I\}$.

\medskip

\noindent
It can be proved that:
\begin{lemma} \label{lemma:approx}
Given a constant $c$ such that $c \approx a$ for $a \in N_I$,
if  $inst(c,A)$ ($\mathit{triple(c,R,d), triple(d,R,c)}$) is in $S$,
then $inst(a,A)$ ($\mathit{triple(a,R,d), triple(d,R,a)}$) is in $S$.
\end{lemma}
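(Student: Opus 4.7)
The plan is to proceed by induction on the length $n$ of a chain $c = c_0, c_1, \ldots, c_n = a$ that witnesses $c \approx a$ under the reflexive-symmetric-transitive closure defining $\approx$. The base case $n = 0$ is immediate. For the inductive step, the inductive hypothesis transfers the given fact from $c$ to $c_{n-1}$, so it suffices to propagate it across the final edge. By construction that edge comes from a single base pair and, after symmetrisation, has one of two forms: either $inst(c_{n-1}, a) \in S$ with $a \in N_I$, or $inst(a, c_{n-1}) \in S$ with $c_{n-1} \in N_I$. In both subcases $nom$ holds of at least one endpoint, which is what enables rules $(27)$, $(28)$, $(29)$ from $\Pi_{IR}$ to fire.

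The two easy conclusions are the instance and the incoming-triple ones. For $inst(c_{n-1}, A) \in S \Rightarrow inst(a, A) \in S$, I would apply rule $(27)$ in the first subcase and rule $(28)$ in the second. For the incoming-triple claim $triple(d, R, c_{n-1}) \in S \Rightarrow triple(d, R, a) \in S$, rule $(29)$ directly discharges the first subcase; in the second subcase both endpoints are in $N_I$, and I would first derive $inst(c_{n-1}, a) \in S$ by combining the given $inst(a, c_{n-1})$ with $inst(a, a)$ (from rule $(1)$) via rule $(27)$, and then apply $(29)$ as before.

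The main obstacle is the outgoing-triple claim $triple(c_{n-1}, R, d) \in S \Rightarrow triple(a, R, d) \in S$, because the calculus in Appendix B contains no rule dual to $(29)$ that rewrites the source position of a triple via a nominal. My plan is to arrange the proof so that the instance-transfer is established first and then used freely in the outgoing-triple argument, which itself proceeds by a secondary induction on the depth of the derivation of $triple(c_{n-1}, R, d)$ inside $S$, with a case split on the last rule used. Rule $(9)$ produces such a triple from some $supEx(y, R, z, d)$ together with $inst(c_{n-1}, y)$; the already-proved instance part of the lemma supplies $inst(a, y) \in S$, so $(9)$ re-fires with source $a$. Rules $(13)$ and $(15)$ preserve the source of the conclusion, so the secondary IH applied to their antecedent triples, composed with the same rule, delivers $triple(a, R, d) \in S$; the input-translation case of a direct $R(b,d)$ assertion reduces to rule $(9)$ applied to $supEx(b, R, d, d)$ with $inst(b, b)$, hence to the previous argument. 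Getting the bookkeeping of this layered induction right, especially ensuring that the secondary induction does not circularly rely on the chain-length IH at its own level, is what I expect to be the most delicate part of the proof.
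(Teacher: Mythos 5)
Your overall strategy is the right one, and it is in fact more explicit than the paper's own argument, which simply defers to the proof of Lemma~2 in Kr\"{o}tzsch's technical report. You correctly isolate the genuinely delicate point: rules $(27)$--$(29)$ let you rewrite the concept position of $inst$ and the \emph{target} position of $triple$, but nothing rewrites the \emph{source} of a $triple$, so the outgoing-triple claim needs an induction on the derivation of $triple(c,R,d)$ inside the (supported) answer set, re-firing rule $(9)$ with the transferred $inst(a,y)$ and pushing the induction hypothesis through the source-preserving rules. That secondary induction is sound; your only omission there is rule $(29)$ itself as a possible last step of the derivation (it preserves the source, so it is handled exactly like $(13)$ and $(15)$), and you could simplify the bookkeeping by running a single derivation-induction for the whole chain once instance transfer is in hand, rather than nesting it inside the chain induction.

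There is, however, a genuine gap in the setup of your primary induction. The base relation underlying $\approx$ consists of pairs $(c,d)$ with $inst(c,d)\in S$ and $d\in N_I$; only the \emph{second} component of each base edge is guaranteed to be a nominal, so after symmetrisation a witnessing chain $c_0,\ldots,c_n$ can pass through intermediate constants that are \emph{not} in $N_I$ (e.g.\ $aux_1, b, aux_2, a$ with $inst(aux_1,b), inst(aux_2,b), inst(aux_2,a)\in S$ and $b,a\in N_I$). Your inductive hypothesis ``transfers the given fact from $c$ to $c_{n-1}$'', but the lemma --- and hence the hypothesis you are inducting on --- only asserts transfer to a constant in $N_I$; when $c_{n-1}\notin N_I$ (which is precisely your first subcase, $inst(c_{n-1},a)\in S$) the appeal to the IH is unjustified, and a uniform strengthening to arbitrary targets is false for the incoming-triple claim, since $(29)$ requires $nom$ of the new target. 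The fix is to strengthen the induction so that a non-nominal intermediate element is traversed in a single two-edge hop from the preceding nominal $c_{n-2}$: from $inst(c_{n-1},c_{n-2})$ and $inst(c_{n-1},a)$ one gets instance transfer by $(28)$ followed by $(27)$, and derives $inst(c_{n-2},a)$ by $(27)$ (instantiating $z$ with $a$) so that $(29)$ applies for incoming triples. With that repair, and the missing $(29)$ case in the secondary induction, your argument goes through.
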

The proof is similar to the proof of Lemma 2 by Kr\"{o}tzsch \shortcite{jeliaReport}.
Vice-versa it holds that:
\begin{lemma} \label{lemma:approx2}
Given a constant $c$ such that $c \approx a$ for $a \in N_I$,
if  $inst(a,A)$ ($triple(a,R,d)$) is in $S$,
then $inst(c,A)$ ($triple(c,R,d)$) is in $S$.
\end{lemma}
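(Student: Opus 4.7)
The plan is to reduce both statements to an intermediate fact about propagation along the chain witnessing $c \approx a$, and then to exploit the materialization rules (27)--(29) together with the rules that can produce $triple$ atoms.

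First I would establish the auxiliary claim that $c \approx a$ with $a \in N_I$ already forces $inst(c,a) \in S$. This goes by induction on the length $n$ of a chain $c = c_0, c_1, \ldots, c_n = a$ witnessing $c \approx a$, with the base case $n = 0$ being rule (1) applied to the nominal $a$. In the inductive step, according to whether the first link points forward ($c_0 \sim_0 c_1$, so $c_1 \in N_I$ and $inst(c_0,c_1) \in S$) or backward ($c_1 \sim_0 c_0$, so $c_0 \in N_I$ and $inst(c_1,c_0) \in S$), one applies either rule (28) to $inst(c_0,c_1), nom(c_1), inst(c_1,a)$, or rule (27) to $inst(c_1,c_0), nom(c_0), inst(c_1,a)$; in both cases the inductive hypothesis supplies $inst(c_1,a) \in S$. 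The $inst$ half of the lemma then follows immediately: from the auxiliary claim and rule (28) applied to $inst(c,a), nom(a), inst(a,A)$ one obtains $inst(c,A) \in S$.

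For the $triple$ half I would argue by induction on the depth of a derivation of $triple(a,R,d)$ in $S$, with a case analysis on the last rule used. The only $triple$-producing rules in the fragment are (9), (13), (15), and (29) (the input translation of an assertion $R(a,b)$ feeds directly into rule (9) via $supEx(a,R,b,b)$ and the trivial $inst(a,a)$ from rule (1)). In the rule (9) case, $a$ appears only in the $inst$-premise $inst(a,B_0)$, so the already-proved $inst$ half of the lemma replaces it by $inst(c,B_0)$ and rule (9) is re-applied to yield $triple(c,R,d)$. In the rule (13), (15) and (29) cases, $a$ is carried over from a $triple$-premise of strictly smaller derivation depth (respectively $triple(a,R_0,d)$, the first premise $triple(a,R_1,e)$, and $triple(a,R,e)$); the inductive hypothesis supplies the corresponding $triple(c,\ldots)$, and the same rule is re-applied with $c$ in the subject position.

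The main obstacle I expect is precisely this $triple$ half. Unlike the $inst$ case, the materialization calculus of the fragment contains no rule that directly substitutes equivalent constants in the subject position of a $triple$ atom: rule (29) only propagates on the object position. The induction on the derivation circumvents this by pushing the substitution down to the premises, where it eventually reduces either to the $inst$ half (through rule (9)) or to shorter $triple$-derivations, with well-foundedness guaranteed by the strict decrease in derivation depth. A minor subtlety worth flagging is that in the rule (15) case the substitution is performed only on the first premise, so that its endpoint $x'$ (possibly equal to $a$) still matches the subject of the second premise and chaining through $subRChain$ remains sound.
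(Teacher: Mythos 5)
The paper does not actually supply a proof of Lemma \ref{lemma:approx2}: it proves nothing beyond the phrase ``Vice-versa it holds that,'' and only the companion Lemma \ref{lemma:approx} is even attributed to an external source (Kr\"{o}tzsch's Lemma 2). So there is no argument in the paper to compare yours against; what you have written fills a genuine gap, and it is correct. Your auxiliary claim that $c \approx a$ forces $\mathit{inst(c,a)} \in S$ is the right intermediate step: the induction on the length of the $\approx$-chain correctly dispatches forward links with rule (28) (instantiated at $\mathit{inst(c_0,c_1)}, \mathit{nom(c_1)}, \mathit{inst(c_1,a)}$) and backward links with rule (27), and in each case the required $\mathit{nom}$ atom is available because the base relation underlying $\approx$ always has an individual name in the relevant position. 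The $\mathit{inst}$ half then follows by one more application of (28). For the $\mathit{triple}$ half, your diagnosis of the real difficulty is accurate: rule (29) rewrites only the object position, so no single rule application yields $\mathit{triple(c,R,d)}$ from $\mathit{triple(a,R,d)}$, and one must instead induct on derivation depth. Your case analysis is exhaustive --- $\mathit{triple}$ atoms occur in no facts of $\Pi(K,C,D)$ (assertions $R(a,b)$ enter as $\mathit{supEx(a,R,b,b)}$) and in the heads of only rules (9), (13), (15), (29); the choice rule (a) and rules (b)--(d) produce only $\mathit{inst}$ and $\mathit{typ}$ atoms --- and the well-foundedness of the induction is legitimate because $S$ is the minimal model of the reduct, so every supported atom has a finite derivation rank. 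The observation that in case (15) only the first premise needs rewriting, with the intermediate constant $x'$ left untouched so the chain still composes, is exactly the point that would trip up a careless substitution argument.
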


Now, let $[c]=\{ d \mid d \approx c \}$ denote the equivalence class of $c$;
we define the domain $\Delta$ of the interpretation $\emme$ as follows:
 $\Delta=  \{[c] \mid c \in N_I\} \cup \{\wARC \mid$ $\mathit{inst(\auxARC,e) \in S}$ for some $e$ and there is no $d \in N_I$ such that $\auxARC \approx d\}$
 $ \cup \{z_{C_i} \mid$ $\mathit{inst( aux_{C_i},e) \in S}$ for some $e$ and there is no $d \in N_I$ such that $aux_{C_i} \approx d\}$
 $ \cup \{z_C \mid$ there is no $d \in N_I$ such that $aux_{C} \approx d\}$\footnote{We need a single copy of auxiliary constants as, unlike the original calculus \cite{jeliaReport}, we do not handle $\mathit{Self}$ statements.}.

For each element $e \in \Delta$, we define a projection $\prj(e)$ to $Const$ as follows:

- $\prj([c])=c$;

- $\prj(\wARC)=\auxARC$;

- $\prj(z_{C_i})=aux_{C_i}$;

- $\prj(z_{C})=aux_{C}$.

\noindent
The interpretation of individual constants, concepts and roles over $\Delta$ is defined as follows:
\begin{quote}
- for all $c \in N_I$, \ $c^I =[c]$;

 - for all $d \in \Delta$, $A \in N_C$, \   $d \in A^{I}$ iff $\mathit {inst}(\prj(d), A) \in S$; 
 
- for all $d, e \in \Delta$, \ $(d,e) \in R^{I}$ iff ($\mathit{triple}(\prj(d), R, \prj(e)) \in S$).
\end{quote}
This defines an $\elpb$ interpretation.

The the relations $\leq_{C_j}$ on $\Delta$ are defined according to Definition \ref{total_preorder}, which gives a total preorders.
Then, the preference relation $<$ on elements in $\Delta$ is defined according to Definition \ref{def-multipreference-int}, point (c).
Observe that $d_1 \leq_{C_J} d_2$ only depends on which typicality inclusions in ${\cal T}_{C_j}$ are satisfied by $d_1$ and $d_2$
and this only depends on the interpretation of concept names and individual names defined above. Similarly for $<$.

For all concepts $C_j$ such that $C_j^I \neq \emptyset$, there is an element $d \in C_j^I$ and, by construction, $\mathit {inst}(\prj(d), C_j) \in S$.
By rules (b) and (c), $\mathit {inst}(aux_{C_j}, C_j) \in S$ and $\mathit {typ}(aux_{C_j}, C_j) \in S$. 
By rule (d), $aux_{C_j}$ must satisfy all the typicality inclusions for $C_j$. 
Hence, when $C_j^I \neq \emptyset$,  $z_{C_i}$ satisfies in $\emme$ all  the typicality inclusions for $C_j$, 
a condition that is needed for $\emme$ to be $\tip$-compliant  for $K$ (together with the condition that $\emme$ satisfies all strict inclusions, see below).
Furthermore, all elements
$d \in min_{<_{C_i}}(C_i^I)$ must satisfy as well all typicality inclusions for $C_j$ as, like $z_{C_i}$, $d$ must be instance of all concepts  $D$ such that $\tip(C_j) \sqsubseteq D \in {\cal T_{C_j}}$ (otherwise $z_{C_i} <_{C_i} d $ would hold). As a consequence, for all $d \in min_{<_{C_i}}(C_i^I)$, $d \sim_{C_i} d'$.

It is easy to verify that $\emme$ is a $\tip$-compliant cw$^m$-model of $K$.
By construction, the $<_{C_j}$'s are defined according to Definition \ref{total_preorder} 
and $<$ is defined according to Definition \ref{def-multipreference-int}, and satisfy all the properties of preference relations in a cw$^m$-interpretation.
We have to prove that $\emme$  satisfies  all strict inclusions inclusions in $ {\cal T}_{strict}$ and assertions in ${\cal A}$.
As assertions $A(c)$ ($R(a,b)$) are represented by concept inclusions $\{c\} \sqsubseteq A$ (resp. $\{c\} \sqsubseteq A$),
it suffices to prove that all the strict inclusions in $K$ are satisfied in $\emme$.
This can be done by cases, as in  Lemma 2 by Kr\"{o}tzsch \shortcite{jeliaReport}, considering all (normalized) axioms which may occur in $K$
(which are a subset of those admitted in a  $\sroel$ knowledge base).

Hence, $\emme$ is a $\tip$-compliant cw$^m$-model of $K$.
We prove that $\emme$ falsifies $C \sqsubseteq D$, that is, $C^I \not \subseteq D^I$.
Program $\Pi(K,C,D)$ contains $inst(aux_C, C)$ and, hence, $inst(aux_C, C)\in S$. From the hypothesis, $inst(aux_C, D) \not \in S$.
Hence, if $z_C \in \Delta$, by construction, $z_C  \in C^I$ and  $z_C \not \in D^I$.
Otherwise, if  $z_C \not \in \Delta$, $aux_C \approx d$, for some $d \in N_I$.
In such a case, by Lemmas  \ref{lemma:approx} and  \ref{lemma:approx2} above, $inst(d, C) \in S$ and $inst(d, D) \not \in S$.
By construction of $\emme$, $[d] \in C^I$ and $[d] \not \in D^I$. Hence, $C \sqsubseteq D$ is falsified in $\emme$.

\medskip 
For part (2), assume that there is a $\tip$-compliant cw$^m$-model $\emme=\langle \Delta, <_{C_1}, \ldots, <_{C_k}, <, \cdot^I \rangle $ of $K$  
that falsifies the subsumption  $C \sqsubseteq D$. Then there is an element $x \in \Delta$ such that $x \in C^I$ and $x \not \in D$.
We construct a $\tip$-compliant answer set $S$
of the ASP program of $\Pi(K,C,D)$, 
such that $inst(aux_C, C) \in S$ and $inst(aux_C, D) \not \in S$ (notice that $inst(aux_C, C)$ is already a fact in $\Pi(K,C,D)$).

Let us consider which typical properties are satisfied by $x$ in $\emme$, and build the following set of facts: 
\begin{quote}
$F_0= \{ \mathit{inst(aux_C,B)} \mid \; x \in B^I$ for some $\tip(C_j) \sqsubseteq B$ is in $K \}$ 
\end{quote}
and a corresponding set of DL assertions:
\begin{quote}
${\cal A}_0= \{ \mathit{B(aux_C))} \mid \; x \in B^I$ for some $\tip(C_j) \sqsubseteq B$ is in $K \}$
\end{quote}
assuming $aux_C$ is a new individual name added in $N_I$.

Notice that $F_0$ corresponds to a possible choice of properties for $aux_C$ according to rule (a).
Let us consider the Datalog program  $\Pi'(K_{strict},C,D)$ representing the encoding of the strict part of the knowledge base $K$,  
obtained by removing rules (a-d) from $K_{IR}$ and typicality inclusions from the input translation of $K$, 
while adding facts in $F_0$.
$\Pi'(K,C,D) = (\Pi(K_{Strict},C,D) - \{\mbox{(a-d)}\}) \cup F_0$.

We can prove that $\Pi'(K,C,D)$ is consistent,  i.e. $\Pi'(K,C,D)$ does not derives  $\mathit{inst(d,E)}$ for any $E$ such that $E \sqsubseteq \bot$ is in $K$.
By contradiction, if $\Pi'(K,C,D)$ is inconsistent, it derives (in Datalog) $\mathit{inst(d,E)}$ for some $E$ such that $E \sqsubseteq \bot$ is in $K$.
By Lemma 1 of Kr\"{o}tzsch \shortcite{jeliaReport}, we would get:
$K_{strict} \cup {\cal A}_0 \models \kappa(d) \sqsubseteq E$ in $\elpb$, where $\kappa(d)$ is a concept expression associated with the Datalog constant $d$  (for instance, for $d \in N_I$,  $\kappa(d)= \{d\}$). This would imply that $K_{strict} \cup {\cal A}_0$ is inconsistent. However, as $\emme$ satisfies ${\cal T}_{strict} $,  a model $\emme'=\langle \Delta, <_{C_1}, \ldots, <_{C_k}, <, \cdot^{I'} \rangle $ for ${\cal T}_{strict} \cup {\cal A}_0$ can be constructed from $\emme$ by interpreting the new individual name $anx_C$ occurring in ${\cal A}_0$ as $aux_C^{I'}=x$ (while, for the rest, $\emme'$ is defined exactly as $\emme$).
This contradicts the assumption that $\Pi'(K,C,D)$ is inconsistent.

If, for some constant $a$,  $\mathit{inst(a,C_j)}$ is derivable from $\Pi'(K,C,D)$, by Lemma 1 of
Kr\"{o}tzsch  \shortcite{jeliaReport}, any model of 
$K_{strict} \cup {\cal A}_0$ satisfies $\kappa(a) \sqsubseteq C_j$ (with $\kappa(a)$ non-empty).
Hence, in all $\elpb$ models of ${\cal T}_{strict} \cup {\cal A}_0$ the interpretation of concept $C_j$ cannot be empty. 
As we aim at building an answer set $S$ that captures $\tip$-compliance, 
we consider a new set of facts $F_1$ containing, for each constant $aux_{C_i}$ (representing a typical $C_j$-element, i.e.,
a minimal $C_j$-element wrt. $<_{C_j}$), the fact $\mathit{inst(aux_{C_j},C_j)}$, as well as the facts $ \mathit{inst(aux_{C_j},B)} $ to represent the typical properties $B$ of $<_{C_j}$-minimal $C_j$-elements.
Let us consider the following set of facts:
\begin{quote}
$F_1= \{ \mathit{inst(aux_{C_j},C_j)} \mid \; C_j \in {\cal C}, \;  $ such that $\Pi'(K,C,D) \vdash \mathit{inst(a,C_j)}$, for some constant $a\} \; \cup$\\
$\; \mbox{\ \ \ \ \ \ \ \ \  }$ $  \{ \mathit{inst(aux_{C_j},B)} \mid \; C_j \in {\cal C}, \;  $ such that $\Pi'(K,C,D) \vdash \mathit{inst(a,C_j)}$
and  $\tip(C_j) \sqsubseteq B \in K \}$.
\end{quote}
and a corresponding set of DL assertions:
\begin{quote}
${\cal A}_1= \{ \mathit{C_j(aux_{C_j})} \mid \; C_j \in {\cal C}, \;  $ and $ \mathit{inst(aux_{C_j},C_j)} \in F_1 \} \; \cup$ \\
$\; \mbox{\ \ \ \ \ \ \ \ \  }$ \;$  \{ \mathit{B(aux_{C_j})} \mid \; C_j \in {\cal C}, \;  $ and $\mathit{inst(aux_{C_j},C_j)} \in F_1$ and  $\tip(C_j) \sqsubseteq B \in K \}$.
\end{quote}
where we let $aux_{C_j}$ to be a new individual name in $N_I$.
Again, we can prove that the extended Datalog program $\Pi'(K,C,D) \cup F_1$ is consistent. If not, using a similar argument as before, we would conclude that the knowledge base $K_{strict} \cup {\cal A}_0 \cup {\cal A}_1$ is inconsistent, which is not true. In fact, as observed above, if  $\mathit{inst(a,C_j)}$ is derivable from $\Pi'(K,C,D)$,   in all $\elpb$ models of ${\cal T}_{strict} \cup {\cal A}_0$ the interpretation of concept $C_j$ cannot be empty.
In particular, $\emme'$ is a model of $K_{strict} \cup {\cal A}_0$ and $\emme'$, like $\emme$, is $\tip$-compliant for $K$. Thus, for each $C_j$ such that $\mathit{inst(a,C_j)}$ is derivable from $\Pi'(K,C,D)$, $C_j^{I'} \neq \emptyset$
and, by $\tip$-compliance, there must be some $y_j \in \Delta$ such that 
$y_j \in C_j^{I'}$ and $y_j$ satisfies all  defeasible inclusions in ${\cal T}_{C_j}$. We can therefore build a model $\emme''$ of $K_{strict} \cup {\cal A}_0 \cup {\cal A}_1$ using the domain element $y_j$ as the interpretation of the individual name 
$aux_{C_j}$ (letting $aux_{C_j}^{I''}=y_j$).  For the rest, $\emme''$ is defined as $\emme'$.

We can further add to $\Pi'(K,C,D) \cup F_1$ the set of facts:
\begin{quote}
$F_2= \{ \mathit{typ(aux_{C_j},C_j)} \mid \; C_j \in {\cal C}, \;  $ such that $\Pi'(K,C,D) \vdash \mathit{inst(a,C_j)}$ for some $a\}$,
\end{quote}
where $ \mathit{typ(aux_{C_j},C_j)} $ makes it explicit that $aux_{C_i}$ is a typical instance of $C_j$ (wrt. $<_{C_j}$).
As the predicate $\mathit{typ}$ does not occur in $\Pi'(K,C,D) \cup F_1$, the program $\Pi'(K,C,D) \cup F_1 \cup F_2$ is still consistent.

Similarly, if we add to $\Pi'(K,C,D)$ also the encoding of the typicality inclusions in the knowledge base $K$ by facts $\mathit{subTyp(C_j,B,N)}$
in $\Pi_K$, then the resulting program $\Pi''(K,C,D) \cup F_1 \cup F_2$ is still consistent. In fact, predicate $\mathit{subTyp}$ only occurs in the input translation. 

Let $S$ be the set of all ground facts which are derivable in Datalog from program $\Pi''(K,C,D)$ $ \cup F_1 \cup F_2$.
The ground instances of all rules in $\Pi''(K,C,D)$ are clearly satisfied in $S$. 
Rules (a)-(d) are the only rules in $\Pi(K,C,D)$ that do not belong to $\Pi''(K,C,D)$.
The ground instances of rules (a)-(d) (over the Herbrand Universe of program $\Pi''(K,C,D) \cup F_1 \cup F_2$) are satisfied in $S$.
For rule (a), for any $C_j \in {\cal C}$, the literals in $F_0$ make the head of disjunctive rule (a) satisfied in $S$, for each $B$ such that $\tip(C_j) \sqsubseteq B \in K$.
For rule (b), by construction, if $\mathit{inst(a,C_j)} \in S$, for some constant $a$, then $\mathit{inst(aux_{C_j},C_j)}$ is in $F_1$ and then in $S$.
For (c), $\mathit{typ(aux_{C_j},C_j)}$ is in $F_2$ (and hence in $S$) if $\mathit{inst(aux_{C_j},C_j)}$ is in $F_1$.
For (d), if $\mathit{typ(aux_{C_j},C_j)}$ is in $S$, then  $\mathit{typ(aux_{C_j},C_j)}$ is in $F_2$ and  $\mathit{inst(aux_{C_j},C_j)}$ is in $F_1$.
Then, by construction, $F_1$  must also contain $\mathit{inst(aux_{C_j},B)}$ for each  $\tip(C_j) \sqsubseteq B$ in $K$ (i.e., for each $\mathit{subTyp(C_j,B,N)}$ in $\Pi_K$).

We have proved that $S$ is a consistent set of ground atoms and all the ground instances of the rules in $\Pi(K,C,D)$ are satisfied in $S$.
To see that $S$ is an answer set of $\Pi(K,C,D)$, we have to show that all literals in $S$ are supported in $S$.
Just observe that, all facts in $F_0$ are obtained applying the disjunctive rule (a). Facts $\mathit{inst(aux_{C_j},C_j)}$ in $F_1$ and $\mathit{typ(aux_{C_j},C_j)}$ in $F_2$ can be derived using rules (b) and (c) from facts ($\mathit{inst(a,C_j)}$) derived from $\Pi(K,C,D) \cup F_0$ (by construction of $F_1$ and $F_2$). The facts in $\mathit{inst(aux_{C_j},B)}\in F_1$ for each  $\tip(C_j) \sqsubseteq B$ in $K$ are derived using rules (d) from $\mathit{subTyp(C_j,B,N)}$ in $\Pi_K$ and from facts $\mathit{inst(aux_{C_j},C_j)}$ in $F_1$, which are supported in $S$.

To conclude the proof, we need to show that $\mathit{inst(aux_{C},D)}\not \in S$. Observe that the properties satisfied by $\mathit{aux_C}$ are given by the set of literals $F_0$, and $\mathit{inst(aux_{C},B)}$ iff $x \in B^I$ in model $\emme$. As from the hypothesis $x \not \in D^I$, then  $\mathit{inst(aux_{C},D)}$ is not in $F_0$. If $\mathit{inst(aux_{C},D)}$ were in $S$, by construction, it would be derivable from $\Pi'(K,C,D) \cup F_1$.
However, in this case, by Lemma 1 of Kr\"{o}tzsch \shortcite{jeliaReport}, any $\elpb$ model of 
$K_{strict} \cup {\cal A}_0 \cup {\cal A}_1$ would satisfy $\{aux_{C}\} \sqsubseteq D$.
This contradicts the fact that model $\emme''$ satisfies $K_{strict} \cup {\cal A}_0 \cup {\cal A}_1$ and interprets the individual name $aux_{C}$ as $(aux_{C})^{I''}=(aux_{C})^{I'}=x$.
From the hypothesis,   $x \not \in D^I$ in $\emme$ and,  hence,  $x \not \in D^{I''}$ in $\emme''$.

\medskip

\noindent
{\em Proposition \ref{AS to pref-models}.}
{Given a normalized ranked knowledge base $K=\langle  {\cal T}_{strict},$ $ {\cal T}_{C_1}, \ldots,$ $ {\cal T}_{C_k}, {\cal A}  \rangle$ over the set of concepts ${\cal C}$, 
and a subsumption  $\tip(C) \sqsubseteq D$, we can prove the following:
\begin{itemize}
\item[(1)]
if there is a canonical and $\tip$-compliant cw$^m$-model $\emme=(\Delta, <_{C_1}, \ldots, <_{C_k}, <, \cdot^I)$ of $K$  
that falsifies  $\tip(C) \sqsubseteq D$, then there is a preferred answer set $S$ 
of $\Pi(K,C,D)$ according to  $\mathit{Pref}$,
such that $inst(aux_C, D) \not \in S$.

\item[(2)]
if there is a preferred answer set $S$ of $\Pi(K,C,D)$  according to  $\mathit{Pref}$,  such that $inst(aux_C, D) \not \in S$,
then there is a canonical and $\tip$-compliant  cw$^m$-model $\emme=(\Delta, <_{C_1}, \ldots, <_{C_k}, <, \cdot^I)$ of $K$  
that falsifies  $\tip(C) \sqsubseteq D$.
\end{itemize}
}

\begin{proof}
For part (1), assume that there is a canonical $\tip$-compliant cw$^m$-model $\emme=(\Delta, <_{C_1}, \ldots, <_{C_k}, <, \cdot^I)$ of $K$  
that falsifies the subsumption  $\tip(C) \sqsubseteq D$. Then, there is  some $x \in \Delta$, such that $x \in min_<(C^I)$ and $x \not \in D^I$.

By Proposition \ref{AS to models}, part (2), we know that there is an answer set $S$ 
of program $\Pi(K,C,D)$  
such that $inst(aux_C, D) \not \in S$. 
We have to prove that $S$ is a preferred answer set of $\Pi(K,C,D)$.

Observe that, as  $\mathit{inst(aux_C,C) \in \Pi(K,C,D)}$, $\mathit{inst(aux_C,C)}$ is in  $S$. By construction of $S$, the answer set $S$ contains the set of facts $F_0$ where:
$F_0= \{ \mathit{inst(aux_C,B)} \mid \; x \in B^I$ for some $\tip(C_j) \sqsubseteq B$ is in $K \}$.

Suppose, by contradiction that $S$ is not preferred among the answer sets of $\Pi(K,C,D)$.
Then there is another answer set $S'$ which is preferred to $S$. By this we mean that $aux_C$ is $S'$ (let us denote it as $aux_C^{S'}$) is globally preferred to  $aux_C$ is $S$ (let us denote it as $aux_C^S$), that is, $aux_C^{S'} < aux_C^S$. 
We have seen that the relation of global preference among the ASP constants $aux_C^{S'}$ and $ aux_C^S$ is, essentially, the same as in Definition \ref{def-multipreference-int}, point (c), andt relations $<_{C_j}$'s and $\leq_{C_j}$
are defined according to Definition \ref{total_preorder}, by letting: 
\begin{quote}
$\mathit{{\cal T}^l_{C_i}(aux_C^{S'})}$ $=\{ \mathit{B \mid \;  inst(aux_C, C_i) \in S'}$ or $\mathit{inst(aux_C, B) \in S' }$  s.t. $\tip(C_i) \sqsubseteq B \in K \}$   and

$\mathit{{\cal T}^l_{C_i}(aux_C^S)}$ $=\{ \mathit{B \mid \; inst(aux_C, C_i) \in S}$ or $\mathit{inst(aux_C, B) \in S}$  s.t. $\tip(C_i) \sqsubseteq B \in K  \}$.
\end{quote}

By Proposition \ref{AS to models}, part (1), from the answer set $S'$ we can construct  a $\tip$-compliant cw$^m$-model $\emme'=(\Delta', <'_{C_1}, \ldots, <'_{C_k}, <', \cdot^{I'})$ of $K$
that contains a domain element $z_C \in \Delta'$ such that $z_C \in C^{I'}$ and $z_C \in B^{I'}$, for all $\mathit{\tip(C_i) \sqsubseteq B \in K}$ such that $\mathit{inst(aux_C, B) \in S' }$.

As $\emme$ is a canonical model, there must be an element $y \in \Delta$ such that $y\in B^I$ iff $z_C \in B^{I'}$, for all concepts $B$.
Therefore, 
for all $\mathit{\tip(C_i) \sqsubseteq B \in K}$, $y\in B^I$ iff $\mathit{inst(aux_C, B) \in S' }$.
We have already observed that: 
for all $\mathit{\tip(C_i) \sqsubseteq B \in K}$, $x\in B^I$ iff $\mathit{inst(aux_C, B) \in S }$.
From $aux_C^{S'} < aux_C^S$, it follows that $y < x$, which contradicts the hypothesis that $x$ is a $<$-minimal $C$-element in $\emme$.
Then, $S$ must be preferred among the answer sets of $\Pi(K,C,D)$.

\medskip

For part (2), let us assume that there is a preferred answer set $S$ of $\Pi(K,C,D)$ such that $inst(aux_C, D) \not \in S$.
By Proposition \ref{AS to models}, point (1), from the answer set $S$ we can construct  a $\tip$-compliant cw$^m$-model $\emme^*=(\Delta^*, <^*_{C_1}, \ldots, <^*_{C_k}, <^*, \cdot^{I^*})$ of $K$ in which there is some domain element $z_C \in \Delta$ such that $z_C \in B^{I^*}$ iff $\mathit{inst(aux_C, B) \in S }$.
In particular, $z_C \in C^{I^*}$ and $z_C \not \in D^{I^*}$. 

As $\emme$ is a preferential model for $K$, $K$ is consistent, and there is a canonical preferential model $\enne=$ $\langle \Delta, {<^{\enne}}, \cdot^I \rangle $ for $K$ \cite{GiordanoFI2018}. As $\enne$ is canonical, there must be an element $y \in \Delta$ such that $y\in B^I$ iff $z_C \in B^{I}$, for all $\elpb$ concepts $B$. 
Therefore, $y\in B^I$ iff $\mathit{inst(aux_C, B) \in S }$.
In particular, $y \in C^I$ and $y \not \in D^I$. 
$(\Delta, , \cdot^I)$ is a $\elpb$ model of $K$, which is canonical and $\tip$-compliant for $K$ (as any preferential model is $\tip$-compliant). 
Let $\emme=\langle \Delta, <_{C_1}, \ldots, <_{C_k}, <, \cdot^I \rangle$ be the cw$^m$-model obtained by adding to $\enne$ the preference relations in $\Delta$ defined according to Definition \ref{def-multipreference-int}, point (c), and Definition  \ref{total_preorder}.  

To conclude that $\tip(C) \sqsubseteq D$ is falsified in a canonical $\tip$-compliant  cw$^m$-model $\emme$. We have to prove that  $y \in \tip(C)^I$, that is
$y \in min_<(C^I)$. 

For a contradiction, let us assume that $y \not \in min_<(C^I)$. Then there is some $w \in \Delta$ such that $w \in min_<(C^I)$ and $w<y$. 
Given model $\emme$ and $w \in \Delta$, we can build an answer set $S'$ of $\Pi(K,C,D)$ (as done in the proof of Proposition  \ref{AS to models}, part (2),  such that
$\mathit{inst(aux_C,B)} \in S'$ $\iff  w \in B^I$, for all $B$ such that $\tip(C_j) \sqsubseteq B$ is in $K$.

As $w<y$, it must be that the answer set $S'$ is preferred to the answer set $S$ of $\Pi(K,C,D)$, 
as $aux_C$ is $S'$ (let us denote it as $aux_C^{S'}$) is globally preferred to  $aux_C$ is $S$ (let us denote it as $aux_C^S$), that is, $aux_C^{S'} < aux_C^S$. However, this contradicts the hypothesis that $S$ is a preferred answer set. Then we conclude that $y \in min_<(C^I)$. As  $y\in (\tip(C^I)$ and $y \not \in  D^I$, $y$ falsifies the subsumption  $\tip(C) \sqsubseteq D$.
\end{proof}

\begin{proposition} \label{prop: upper bound}
 cw$^m$-entailment is  in  $\Pi^p_2$.
 \end{proposition}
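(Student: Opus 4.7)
The plan is to exploit Proposition \ref{AS to pref-models}, which reduces cw$^m$-entailment of $\tip(C) \sqsubseteq D$ to checking whether $\mathit{inst(aux_C, D)}$ belongs to every preferred answer set of the program $\Pi(K,C,D)$ under the preference specification $\mathit{Pref}$. I therefore establish the $\Pi^p_2$ upper bound by showing that the complementary problem (non-entailment) lies in $\Sigma^p_2$.

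First, I verify that $\Pi(K,C,D)$ has polynomial size in $K$. The components $\Pi_K$, $\Pi_{IR}$ and $\Pi_{C,D}$ each have size linear in the normalized $K$: $\Pi_K$ is Kr\"otzsch's Datalog translation of $K$, $\Pi_{IR}$ comprises a fixed set of instance-checking rules together with polynomially many instances of rules (a)--(d), and $\Pi_{C,D}$ adds at most a constant number of facts. The only source of nondeterminism is the choice rule (a), which contributes polynomially many ground choices (one per typicality inclusion of a distinguished concept); once these choices are fixed, the remaining atoms of an answer set are deterministically derived through the stratified rules of the materialization calculus and the specificity rule. Consequently, a candidate set $S$ of ground atoms can be verified to be an answer set of $\Pi(K,C,D)$ in polynomial time.

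Second, I describe a $\Sigma^p_2$ procedure for deciding non-entailment: (i) nondeterministically guess a polynomial-size candidate $S$; (ii) verify in polynomial time that $S$ is an answer set of $\Pi(K,C,D)$ with $\mathit{inst(aux_C, D) \notin S}$; and (iii) invoke a coNP oracle to certify that no other answer set $S'$ of $\Pi(K,C,D)$ is strictly preferred to $S$ according to $\mathit{Pref}$. For the oracle step, given $S$ and a candidate competitor $S'$, one computes the sets ${\cal T}^l_{C_i}(\mathit{aux_C^S})$ and ${\cal T}^l_{C_i}(\mathit{aux_C^{S'}})$ for each distinguished concept $C_i$ and rank $l$, evaluates the preorders $\leq_{C_i}$ via Definition \ref{total_preorder}, precomputes the specificity relation $\succ$ through polynomially many $\elpb$ subsumption tests on ${\cal T}_{strict}$, and combines these into the global preference $<$ via Definition \ref{def-multipreference-int}(c); deciding whether $\mathit{aux_C^{S'}} < \mathit{aux_C^S}$ is therefore polynomial in $|S|+|S'|+|K|$, so universally quantifying over $S'$ yields exactly a coNP check.

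By Proposition \ref{AS to pref-models}, this procedure accepts on input $(K,C,D)$ exactly when $K \not\models_{cw^m} \tip(C) \sqsubseteq D$; hence non-entailment is in $\Sigma^p_2$ and cw$^m$-entailment is in $\Pi^p_2$. The main subtlety, and the place where the extra level of the polynomial hierarchy enters, is step (iii): preferredness is defined implicitly by a universal quantification over competing answer sets, and it is only the tractability of pairwise preference comparison---which in turn rests on the fact that $\mathit{Pref}$ reduces to rank-wise cardinality comparisons refined by specificity---that keeps this check within coNP rather than pushing it higher in the polynomial hierarchy.
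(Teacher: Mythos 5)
Your proof is correct and follows essentially the same route as the paper: reduce to the complementary problem via Proposition \ref{AS to pref-models}, guess a polynomial-size answer set $S$ with $\mathit{inst(aux_C,D)\notin S}$, and use one NP/coNP oracle call to certify that no answer set $S'$ is strictly preferred, placing non-entailment in $\Sigma^p_2$. The extra detail you supply on why answer-set verification and pairwise preference comparison are polynomial is consistent with (and slightly more explicit than) what the paper asserts.
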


\begin{proof}
We consider the complementary problem, that is, the problem of deciding whether $\tip(C) \sqsubseteq D$ is not cw$^m$-entailed by $K$.
It requires determining if there is a canonical $\tip$-complete cw$^m$-model of $K$ falsifying $\tip(C) \sqsubseteq D$ or, equivalently (by Proposition \ref{AS to pref-models}), there is a preferred answer set $S$ 
of $\Pi(K,C,D)$ such that $inst(aux_C, D) \not \in S$.  

This problem can be solved by an algorithm that non-deterministically guesses a ground interpretation $S$ over the language of $\Pi(K,C,D)$, 
of polynomial size (in the size of $\Pi(K,C,D)$)
and, then, verifies that $S$ satisfies all rules in $\Pi(K,C,D)$ and is supported in $S$ (i.e., it is an answer set of $\Pi(K,C,D)$),  that $inst(aux_C, D) \not \in S$ and that 
$S$ is preferred among the answer sets of $\Pi(K,C,D)$.
The last point can be verified using an NP-oracle which answers "yes" when $S$ is  a preferred answer set of $\Pi(K,C,D)$, and "no" otherwise.

The oracle checks if there is an answer set $S'$ of $\Pi(K,C,D)$ which is preferred to $S$, by
 non-deterministically guessing a ground polynomial interpretation $S'$ over the language of $\Pi(K,C,D)$,  and verifying that $S$ satisfies all rules and is supported in $S'$ (i.e., it is an answer set of $\Pi(K,C,D)$),  and that $S'$ is preferred to $S$. These checks can be done in polynomial time. 

Hence, deciding whether $\tip(C) \sqsubseteq D$ is not cw$^m$-entailed by $K$ is in $\Sigma^p_2$,  and the complementary problem of deciding cw$^m$-entailment is in $\Pi^p_2$. 
\end{proof}

\begin{proposition} \label{prop:hardness}
The problem of deciding cw$^m$-entailment  is 
\textsc{$\Pi^P_2$}-hard.
\end{proposition}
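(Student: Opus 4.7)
The plan is to reduce cautious reasoning under the minimal model semantics for positive disjunctive logic programs, which is $\Pi^p_2$-complete by Eiter and Gottlob \shortcite{Eiter95}, to the cw$^m$-entailment problem in $\elpb$.

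Given a positive disjunctive program $P$ over atoms $\mathit{At}(P)=\{a_1,\ldots,a_n\}$ and a query $q$, I would construct a ranked $\elpb$ knowledge base $K$ and a target subsumption $\tip(C) \sqsubseteq A_q$ as follows. For each atom $a \in \mathit{At}(P)$ introduce two concept names $A_a$ and $\bar{A}_a$ (to play the roles of ``$a$ is true'' and ``$a$ is false'') together with a distinguished concept $C_a$, and add a base concept $C$, letting ${\cal C}=\{C_{a_1},\ldots,C_{a_n}\}$. The strict TBox ${\cal T}_{strict}$ contains: the disjointness axioms $A_a \sqcap \bar{A}_a \sqsubseteq \bot$ for each $a$; the equivalences $C \sqsubseteq C_a$ and $C_a \sqsubseteq C$, so that all distinguished concepts coincide with $C$ in every model; and, for each rule $a_1\lor\ldots\lor a_k \leftarrow b_1,\ldots,b_l$ of $P$, the inclusion $C \sqcap A_{b_1} \sqcap \ldots \sqcap A_{b_l} \sqcap \bar{A}_{a_1} \sqcap \ldots \sqcap \bar{A}_{a_k} \sqsubseteq \bot$, which forbids $C$-elements that falsify the rule. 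Each ranked TBox ${\cal T}_{C_a}$ consists of the single typicality inclusion $\tip(C_a) \sqsubseteq \bar{A}_a$ at rank $0$; the ABox is empty. This is polynomial in the size of $P$.

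The core claim is that in every canonical $\tip$-compliant cw$^m$-model of $K$ the $<$-minimal $C$-elements correspond precisely to the subset-minimal models of $P$. Since the $C_a$'s are pairwise equivalent under ${\cal T}_{strict}$, no specificity $\succ$ holds between distinguished concepts, so the global preference $<$ reduces to Pareto aggregation of the individual $<_{C_a}$. Each $<_{C_a}$, arising from a singleton rank-$0$ TBox under Brewka's $\#$ strategy, satisfies $x <_{C_a} y$ iff $x \in \bar{A}_a^I$ and $y \notin \bar{A}_a^I$; setting $T_x = \{a : x \in \bar{A}_a^I\}$, one obtains $x < y$ iff $T_y \subsetneq T_x$. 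By canonicity, for every $M \subseteq \mathit{At}(P)$ whose corresponding concept conjunction is consistent with ${\cal T}_{strict}$ there is a domain element $x_M$ with $x_M \in A_a^I \Leftrightarrow a \in M$ and $x_M \in \bar{A}_a^I \Leftrightarrow a \notin M$; such an $x_M$ satisfies the strict rule encodings exactly when $M$ is a model of $P$, and $<$-minimality among these $x_M$ coincides with subset-minimality of $M$. Consequently, $\tip(C) \sqsubseteq A_q$ is cw$^m$-entailed by $K$ iff $q$ belongs to every subset-minimal model of $P$, yielding $\Pi^p_2$-hardness.

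The main obstacle, as I foresee it, is the \emph{completion argument}: verifying that in any canonical $\tip$-compliant cw$^m$-model no $<$-minimal $C$-element is left undecided on some atom $a$, i.e., lies in neither $A_a^I$ nor $\bar{A}_a^I$. This requires showing that any undecided element is strictly dominated (under $<$) by a ``completed'' element whose existence in the canonical model follows from consistency of the completed concept conjunction with ${\cal T}_{strict}$; the subset-minimality of $M$ then blocks further completion since removing an atom from $M$ would violate some rule-encoding strict inclusion, ensuring that $x_M$ itself is $<$-minimal. Once this completion argument is carried out, the correspondence between $<$-minimal $C$-elements and subset-minimal models of $P$, together with the polynomiality of the construction, completes the hardness proof.
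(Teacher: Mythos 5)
Your reduction has the same architecture as the paper's (minimal entailment for positive disjunctive programs, domain elements as valuations, clauses as strict $\bot$-constraints, per-atom preferences to capture subset-minimality, canonicity to guarantee all candidate valuations exist), but the step you yourself flag as the main obstacle --- the completion argument --- genuinely fails with your choice of ranked TBoxes. With a single rank-$0$ inclusion $\tip(C_a) \sqsubseteq \overline{A}_a$, the preference $<_{C_a}$ cannot distinguish a $C$-element that is ``undecided'' on $a$ (in neither $A_a^I$ nor $\overline{A}_a^I$) from one that makes $a$ true: both violate the unique default for $C_a$, so they are tied under $<_{C_a}$, and hence under the global $<$ if they agree elsewhere. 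Concretely, if $q$ belongs to every minimal model of $P$, take a $<$-minimal element $x_M$ with $x_M \in A_q^I$ and consider the element $y$ that agrees with $x_M$ on all other atoms but is undecided on $q$. Your rule-encoding inclusions do not exclude $y$ (a rule with $q$ in the head is only ``violated'' by elements in $\overline{A}_q$, and a rule with $q$ in the body is not triggered since $y \notin A_q^I$), so by canonicity $y$ exists, it is $\sim$-equivalent to $x_M$ and therefore also $<$-minimal, yet $y \notin A_q^I$. Thus $\tip(C) \sqsubseteq A_q$ is not cw$^m$-entailed even though $P \models_{min} q$, and the reduction returns the wrong answer. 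Your proposed fix --- that undecided elements are strictly dominated by completed ones --- only works for completion towards $\overline{A}_a$; completion towards $A_a$ yields no strict preference, and it is exactly the atoms forced true that break the correspondence.

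The paper's proof closes this gap with a device your encoding lacks: each distinguished concept carries \emph{two} typicality inclusions at different ranks, $\tip(M_h) \sqsubseteq \overline{P_h}$ and $\tip(M_h) \sqsubseteq P_h$, so that Brewka's $\#$ strategy induces a three-level order in which an element deciding $p_h$ (either way) is strictly preferred to an undecided one, while among decided elements the ``false'' option is preferred to the ``true'' one. This simultaneously forces $<$-minimal elements of the interpretation concept to be total valuations and encodes subset-minimality; the rest of the paper's argument (minimal elements correspond to minimal models, and $S \models_{min} l$ iff $K \models_{cw^m} \tip(H) \sqsubseteq C_l$) then goes through. To repair your proof you would need to add a second, lower-priority default $\tip(C_a) \sqsubseteq A_a$ (with the ranks arranged so that ``false'' beats ``true'' beats ``undecided'') and rerun your correspondence argument with the resulting two-level $<_{C_a}$; as written, the single-default encoding does not establish hardness.
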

\begin{proof}
We prove that  the problem of deciding cw$^m$-entailment  is \textsc{$\Pi^P_2$}-hard, by providing a reduction of the minimal entailment problem of
{\em positive disjunctive logic programs}, which has been proved to be a
 \textsc{$\Pi^P_2$}-hard problem by Eiter and Gottlob \shortcite{Eiter95}.

Let $PV = \{p_1, \ldots, p_n \}$ be a set of propositional variables.
A clause is formula $l_1 \vee \ldots \vee l_h$, where each literal $l_j$  is either a
propositional variable $p_i$ or its negation $\neg p_i$.
A positive disjunctive logic program (PDLP) is a set of clauses $S= \{\gamma_1, \ldots, \gamma_m\}$,
where each $\gamma_j$ contains at least one positive literal.
A truth valuation for $S$ is a set $I \subseteq PV$,
containing the propositional variables which are true. A truth valuation is a model of $S$
if it satisfies all clauses in $S$.
For a literal $l$, we write $S \models_{min} l$ if and only if every minimal model (with respect to subset inclusion)
of $S$ satisfies $l$. The minimal-entailment problem can be then defined as follows:
given a PDLP $S$ and a literal $l$, determine whether  $S \models_{min} l$.
In the following we sketch the reduction of the minimal-entailment problem for a  PDLP $S$
to cw$^m$-entailment of an inclusion from a knowledge base $K$ constructed from $S$. 

We define a ranked $\elpb$ knowledge base 
$K=\langle  {\cal T}_{strict},$ $ {\cal T}_{M_1}, \ldots,$ $ {\cal T}_{M_n}, {\cal A}  \rangle$ over the set of concepts ${\cal C}= \{ M_1, \ldots, M_n \}$,
where $\mathit{ABox}= \emptyset $ and each $M_h$ is associated with a propositional variable $p_h \in PV$. 
For each variable $p_h\in PV$ ($h=1,\ldots,n$), we also introduce two concept names $P_h$ and $\overline{P_h}$ in $N_C$, where $\overline{P_h}$ is intended to represent the negation of $p_h$.
We further introduce in $N_C$ an auxiliary concept $H$, 
a concept name $D_S$ associated with the set of clauses $S$,
and a concept name $D_j$ associated with each clause $\gamma_j$ in $S$, for $j=1,\ldots,m$.
$ {\cal T}_{strict}$ contains the following strict inclusions
(where $ C_i^j$ and $\overline{ C_i^j}$ are concepts associated with each literal $l_i^j$ occurring in $\gamma_j = l_1^j \vee \ldots \vee l_k^j$, as defined below):

(1) $C_i^j \sqsubseteq D_j$ \ \ \ for all $\gamma_j = l_1^j \vee \ldots \vee l_k^j$ in $S$

(2) $ D_j \sqcap \overline{ C_1^j}  \sqcap \ldots \sqcap \overline{ C_k^j} \sqsubseteq \bot$
\ \ \ for all $\gamma_j = l_1^j \vee \ldots \vee l_k^j$ in $S$

(3) $D_1  \sqcap \ldots  \sqcap D_m \sqsubseteq D_S$  

(4) $D_S \sqsubseteq  D_1  \sqcap \ldots  \sqcap D_m$  

(5) $H \sqcap P_h \sqcap \overline{ P_h} \sqsubseteq \bot$  \ \ \ ($h=1,\ldots,n$)

(6) $H \sqsubseteq D_S$,

\noindent
for $i=1,\ldots,k$,
 $j=1,\ldots,m$,
where $C_i^j$ is defined as follows:
\[
   C_i^j=
\begin{cases}
	P_h \mbox{\ \ \ \ \ \ \ \ \ \ \ \ \ \ \ \ \ \ \ \ \ \ \ \ \ \  if } l_i^j=p_h\\
    	\overline{P_h} \mbox{\ \ \ \ \ \ \ \ \ \ \ \ \ \ \ \ \ \ \ \ \ \ \ \ \ \ if } l_i^j=\neg p_h
\end{cases}
\]

\[
   \overline{C_i^j}=
\begin{cases}
	\overline{P_h} \mbox{\ \ \ \ \ \ \ \ \ \ \ \ \ \ \ \ \ \ \ \ \ \ \ \ \ \ if } l_i^j=p_h\\
    	P_h \mbox{\ \ \ \ \ \ \ \ \ \ \ \ \ \ \ \ \ \ \ \ \ \ \ \ \ \  if }  l_i^j=\neg p_h
\end{cases}
\]

\noindent
$H$-elements are intended to represent the propositional interpretations.
Inclusions (1)-(4) bind the truth values of the $P_h$'s to the truth values of the clauses in $S$ and of their conjunction (represented by $D_S$). 
By (6), in any cw$^m$-model of $K$, each instance $x$ of $H$ is an instance of $D_S$. 
By (5), each instance $x$ of $H$  cannot be an instance of both $P_h$ and $\overline{P_h}$.
To capture the requirement that an $H$-element must be either an instance of  $P_h$ or  an instance of $\overline{P_h}$, 
we let, for each distinguished concept name $M_h \in {\cal C} $ ($h=1,\ldots,n$), 
 ${\cal T}_{M_h}$ contains the following two typicality inclusions: 
 
$\tip(M_h) \sqsubseteq \overline{P_h}$,  with rank $0$, 

$\tip(M_h) \sqsubseteq P_h$,  with rank $1$. 

\noindent
Relation $<_{M_h}$ prefers  $\overline{P_h}$-elements wrt $P_h$-elements, and prefers $P_h$-elements wrt elements $x$ 
which are neither instances of $P_h$ nor instances of $\overline{P_h}$.
This allows to capture subset inclusion minimality in the interpretations of the positive disjunctive logic program $S$.

It can be proved that the instances of $H$ which are minimal wrt $<_{M_h}$ in a canonical $\tip$-compliant cw$^m$-model of $K$, must be either instances of  $P_h$ or instances of $\overline{P_h}$. 
In particular,  if there is an  $H$-element $x$ which is neither an instance of $P_h$ nor an instance of $\overline{P_h}$, it cannot be $<_{M_h}$-minimal. In fact, in such a case, the canonical model would contain another $H$-element $y$, which is an instance of all the concepts of which $x$ is instance and, in addition, is an instance of $\overline{P_h}$. Clearly $y$ would be preferred to $x$ wrt  $<_{M_h}$, 
contradicting the minimality of $x$ wrt $<_{M_h}$.

It can be proved that minimal $H$-elements with respect to $<$ in a canonical $\tip$-compliant cw$^m$-model of $K$ correspond to minimal models of $S$
and that, given a set $S$ of clauses and a literal $l$,
$$S \models_{min}  l \mbox{\ \ \ } \Leftrightarrow \mbox{\ \ \ } K \models_{cw^m} \tip(H) \sqsubseteq C_l$$
where $C_l$ is the concept associated with $l$, i.e., $C_l=P_h$ if $l=p_h$, and
$C_l=\overline{ P_h}$ if $l=\neg p_h$.

From the reduction above and the result that minimal entailment for PDLP is
 \textsc{$\Pi^P_2$}-hard \cite{Eiter95}, it follows that
 cw$^m$-entailment  is $\Pi^P_2$-hard.
\end{proof}

\end{appendix}

\end{document}